\newtheorem{definition}{Definition}
\definecolor{mygray}{gray}{.9}
\definecolor{mypink}{rgb}{.99,.91,.95}
\definecolor{mycyan}{cmyk}{.3,0,0,0}
\newtheorem{proposition}{Proposition}
\begin{document}
\title{A General Multi-Graph Matching Approach via Graduated Consistency-regularized Boosting}
\author{Junchi Yan, Minsu Cho, Hongyuan Zha, Xiaokang Yang~\IEEEmembership{Senior Member,~IEEE} and Stephen M. Chu
\thanks{A preliminary version of the presented paper appeared in \cite{YanECCV14}. The current paper makes several extensions and improvements: i) propose a new graduated consistency-regularized affinity score boosting algorithm (ISB-GC in Alg.\ref{alg:ISB-GC}) which in general achieves more accurate matching results, meanwhile helps reinterpret the method (ISB-GC-U in  Alg.\ref{alg:ISB-EGC}) of the conference paper; ii) improve the robustness of the methods in face of outliers by integrating a node-wise consistency/affinity-driven inlier eliciting mechanism; iii) present more technical details of the proposed algorithms and convergence discussion which are not fully described in \cite{YanECCV14}; iv) perform extensive evaluations that involve more various settings and additional datasets. In addition, more emerging state-of-the-arts \cite{ChenICML14,ChoCVPR14,YanTIP15} are compared especially after the year 2013. The source code of our methods will be made public available.}
\IEEEcompsocitemizethanks{\IEEEcompsocthanksitem{J. Yan is with Shanghai Key Laboratory of Media Processing and Transmission, Department of Electronic Engineering, Shanghai Jiao Tong University, Shanghai, 200240 China.
He is secondarily affiliated with IBM Research -- China as a Research Staff Member. e-mail: yanjunchi@sjtu.edu.cn}
\IEEEcompsocthanksitem{M. Cho is with WILLOW team of INRIA/Ecole Normale Superieure, Paris, France. e-mail: minsu.cho@inria.fr}
\IEEEcompsocthanksitem{H. Zha is with Software Engineering Institute, East China Normal University, Shanghai, 200062 China, and School of Computational Science and Engineering, College of Computing, Georgia Institute of Technology, Atlanta, Georgia, 30332, USA. e-mail: zha@cc.gatech.edu}
\IEEEcompsocthanksitem{X. Yang is with Shanghai Key Laboratory of Media Processing and Transmission, Department of Electronic Engineering, Shanghai Jiao Tong University, Shanghai, 200240 China. e-mail: xkyang@sjtu.edu.cn}
\IEEEcompsocthanksitem{S. Chu is with IBM T.J. Watson Research Center. e-mail: schu@us.ibm.com}
}
}

\markboth{Junchi Yan et al. A General Multi-Graph Matching Approach via Graduated Consistency-regularized Boosting}%
{Shell \MakeLowercase{\textit{et al.}}: Bare Demo of IEEEtran.cls for Journals}
\maketitle

\begin{abstract}
This paper addresses the problem of matching $N$ weighted graphs referring to an identical object or category. More specifically, matching the common node correspondences among graphs. This multi-graph matching problem involves two ingredients affecting the overall accuracy: i) the local pairwise matching affinity score among graphs; ii) the global matching consistency that measures the uniqueness of the pairwise matching results by different chaining orders. Previous studies typically either enforce the matching consistency constraints in the beginning of iterative optimization, which may propagate matching error both over iterations and across graph pairs; or separate affinity optimizing and consistency regularization in two steps. This paper is motivated by the observation that matching consistency can serve as a regularizer in the affinity objective function when the function is biased due to noises or inappropriate modeling. We propose multi-graph matching methods to incorporate the two aspects by boosting the affinity score, meanwhile gradually infusing the consistency as a regularizer. Furthermore, we propose a node-wise consistency/affinity-driven mechanism to elicit the common inlier nodes out of the irrelevant outliers. Extensive results on both synthetic and public image datasets demonstrate the competency of the proposed algorithms.
\end{abstract}
\section{Introduction}
\IEEEPARstart{G}{raph} matching (GM) \cite{ConteIJPRAI04,FoggiaIJPRAI14} has received extensive attentions over decades, and has found wide applications in various problems such as bioinformatics \cite{ZaslavskiyBio09}, data fusion \cite{WilliamsPRL97}, graphics \cite{HuangSGP13} and information retrieval \cite{LiuVCG14}, name a few. In particular, GM lies at the heart of a range computer vision tasks as diverse as object recognition, shape matching, object tracking, and image labeling among others, which require finding visual correspondences -- refer to \cite{YanTIP15} for more specific references. Different from the point based matching or registration methods such as RANSAC \cite{FischlerCACM81} and Iterative Closet Point (ICP)~\cite{ZhangIJCV94}, GM incorporates both the unary {\em node-to-node}, and the second-order {\em edge-to-edge} structural similarity. By encoding the geometrical information in the representation and matching process, GM methods are in general supposed to be more robust for solving the correspondence problems. Due to its well-known NP-complete nature, existing GM methods involve either finding approximate solutions \cite{ZhouCVPR12,ChoECCV10,LeordeanuNIPS09} or obtaining the global optima in polynomial time for a few types of graphs, including the planar graph \cite{EppsteinSODA95}, bounded valence graph \cite{luks1982isomorphism} and tree structure \cite{aho1989code}.

Most GM methods focus on establishing one-to-one correspondences (which is also the interest of this paper) between a pair of feature points \cite{LeordeanuIJCV12,GoldPAMI96,ChoECCV10,CaetanoPAMI09,EgoziPAMI13}. In general, the pairwise GM problem can be formulated as a quadratic assignment problem (QAP) that accounts for both individual node matches (unary terms) and pairs of matches (pairwise terms). The QAP formulations can be roughly divided into two sub-categories \cite{ZhouCVPR12}: i) the Koopmans-Beckmann's QAP \cite{KBQAP57} $\text{tr}(\textbf{X}^T\textbf{A}_1\textbf{X}\textbf{A}_2)+\text{tr}(\textbf{K}_p^T\textbf{X})$ where $\textbf{X}$ refers to the assignment matrix between two graphs as will be introduced later in the paper. $\textbf{A}_1$, $\textbf{A}_2$ are the weighted adjacency matrices and $\textbf{K}_p$ is the node-to-node similarity matrix between two graphs; ii) the more general Lawler's QAP problem \cite{LawlerMS63} by $\text{vec}(\textbf{X})^T\textbf{K}\text{vec}(\textbf{X})$ where $\textbf{K}$ is the affinity matrix encoding unary and second-order edge affinities. The former can always be represented as a special case of the Lawler's QAP by setting $\textbf{K}=\textbf{A}_2\otimes\textbf{A}_1$.

Nevertheless, in many applications, visual objects do not appear in isolation or in pair, but more frequently in collections or families. Such collections provide a context and potentially allow for higher quality matching and analysis. Given a batch of graphs referring to an identical or related structure, it is required to find the global matchings across all graphs. Due to the advance of modern imaging and scanning technologies, multi-graph matching is applied to infusing multi-source sensor data \cite{WilliamsPRL97}. Graphic shape analysis and search often require to model objects by multi-view assembly~\cite{FunkhouserTOG04}. And \cite{LiuVCG14} applies multi-graph matching to the problem of multi-source topic alignment. It is also related to graph clustering, classification and indexing. Several multi-graph matching methods \cite{RibaltaCVIU11,HuangTOG12,PachauriNIPS13,YanICCV13,RibaltaIJPRAI13,YanECCV14,ChenICML14} are proposed, which will be detailed in the next section.
\section{Related Work and Models}
\textbf{Pairwise graph matching} The problem of pairwise GM involving two graphs has been extensively studied in the literature. It is beyond the scope of this paper for an exhaustive summary of these work. As will be detailed later in this paper, our methods and many other multi-graph matching methods \cite{YanICCV13,YanTIP15,ChenICML14,PachauriNIPS13} build on top of pairwise matching solvers as a black-box, regardless the specific context in which these pairwise GM techniques are devised, such as machine learning based pairwise GM approaches \cite{LeordeanuIJCV12,CaetanoPAMI09,LeordeanuICCV11,ChoICCV13,HuCVPR13}, hyper-graph matching approaches \cite{LeordeanuICCV11,ChertokPAMI10,DuchenneCVPR09,LeeCVPR11,ZassCVPR08} and the recent work \cite{ChoCVPR14} tailored for massive outliers.

Most pairwise GM methods \cite{GoldJANN96,LeordeanuICCV05,CourNIPS06,LeordeanuNIPS09,ChoECCV10,EgoziPAMI13} are based on the Lawler's QAP. Given two graphs $\mathcal{G}_1$ and $\mathcal{G}_2$ of node size $n_1$ and $n_2$, an affinity matrix $\textbf{K}\in\mathbb{R}^{n_1n_2\times n_1n_2}$ is defined such that its elements $\{K_{ia;jb}\}_{i,j=1}^{n_1}{_{a,b=1}^{n_2}}$ measure the edge pair affinity $\{(v_i,v_j)\leftrightarrow (v_a,v_b)\}{_{i,j=1}^{n_1}}{_{a,b=1}^{n_2}}$ from two graphs. The diagonal term $\{K_{ia;ia}\}_{i=1,a=1}^{n_1,n_2}$ describes the unary affinity of a node match $\{v_i\leftrightarrow v_a\}_{i=1,a=1}^{n_1,n_2}$.
More rigorously, most existing graph matching work \cite{YanTIP15,ChoECCV10,ZhouCVPR12,YanECCV14} follow a convention for setting $\textbf{K}$ such that the element $K_{ia;jb}$ for the edge pair $(v_i,v_j)\leftrightarrow(v_a,v_b)$ is located at the (($a$-1)$n_1$+$i$)-th row and (($b$-1)$n_2$+$j$)-th column of $\textbf{K}$.

We define the assignment matrix $\textbf{X}\in \{0,1\}^{n_1\times n_2}$ for two graphs which establishes the one-to-one node correspondence such that $X_{ia}=1$ if node $v_i$ matches node $v_a$ ($0$ otherwise). The problem of GM involves finding the optimal correspondence $\textbf{X}$ such that the sum of the node and edge compatibility between two graphs is maximized. Without loss of generality, similar to \cite{ChoECCV10,ZhouCVPR12}, by assuming $n_1\geq n_2$ for different sizes of graphs, it leads to the following widely used two-way constrained quadratic assignment problem (QAP) (\emph{e.g.} \cite{ChoECCV10,ZhouCVPR12} and the references therein):
\begin{small}
\begin{align}\label{eq:pairwise_objective}
&\textbf{X}^* = \arg \max_\textbf{X} \text{vec}(\textbf{X})^T\textbf{K}\text{vec}(\textbf{X})\\\notag
s.t.& \quad \textbf{X}\textbf{1}_{n_2}\leq \textbf{1}_{n_1}\quad \textbf{1}^T_{n_1}\textbf{X}=\textbf{1}^T_{n_2} \quad \textbf{X} \in \{0,1\}^{n_1\times n_2}
\end{align}
\end{small}
The constraints refer to the two-way one-to-one node mapping: a node from graph $\mathcal{G}_1$ can match at most one node in $\mathcal{G}_2$ and every node in $\mathcal{G}_2$ is corresponding to one node in $\mathcal{G}_1$. There is no (one/many)-to-many matching.

The above formulation is as general as it allows two graphs having unequal number of nodes ($n_1\neq n_2$). A common protocol adopted by the previous studies \cite{GoldJANN96,ZhouCVPR12,YanTIP15} is converting $\textbf{X}$ from an assignment matrix ($\textbf{X}\textbf{1}_{n_2}\leq \textbf{1}_{n_1}$) to a permutation matrix ($\textbf{X}\textbf{1}_{n_2}=\textbf{1}_{n_1}$) by adding dummy nodes to one graph (\emph{i.e.} adding slack variables to the assignment matrix and augment the affinity matrix by zeros) in case $n_1\neq n_2$. This is a common technique from linear programming and is widely adopted by \cite{GoldJANN96,ZhouCVPR12} such that is supposed can handle superfluous nodes in a statistically robust manner (see the last paragraph of Sec.2.3 in \cite{GoldJANN96}). By taking this step, the graphs are of equal sizes. This preprocessing also opens up the applicability of existing multi-graph methods \cite{RibaltaIJPRAI13,PachauriNIPS13,RibaltaCIARP09} as they all assume that all graphs are of equal sizes.
Therefore, the following formulation is derived which assumes $n_1=n_2=n$.
\begin{small}
\begin{align}\label{eq:overall_objective}
&\textbf{X}_{12}^{*} = \arg \max_{\textbf{X}_{12}} \text{vec}(\textbf{X}_{12})^T\textbf{K}_{12}\text{vec}(\textbf{X}_{12})\\\notag
s.t.&\quad \textbf{X}_{12}\textbf{1}_{n_2}=\textbf{1}_{n_1}\quad \textbf{1}^T_{n_1}\textbf{X}_{12}=\textbf{1}^T_{n_2} \quad \textbf{X}_{12} \in \{0,1\}^{n_1\times n_2}
\end{align}
\end{small}
$\textbf{X}_{12}$ is a permutation matrix constructed by augmenting the raw matching matrix with slack columns if $n_1>n_2$. This is also coherent with the ideal case for the problem setting of this paper, matching the common inliers of equal sizes among graphs in the absence of outliers, or they are dismissed by a certain means.

We also mention several other threads for pairwise GM: i) learning the affinity over graphs \cite{CaetanoPAMI09,LeordeanuICCV11,LeordeanuIJCV12,ChoICCV13,HuCVPR13}; ii) incorporating higher-order information \cite{LeordeanuICCV11,ChertokPAMI10,DuchenneCVPR09,LeeCVPR11,ZassCVPR08}; iii) progressive matching \cite{ChoCVPR12} or combining feature detection and matching in an integrated system \cite{CollinsECCV14}. These methods focus on pairwise matching and are methodologically less relevant to this paper.

\textbf{Multi-graph matching} Recently, matching a collection of related graphs becomes an emerging research topic with researchers from different communities including computer vision \cite{YanICCV13,YanTIP15}, machine learning \cite{PachauriNIPS13,ChenICML14}, pattern recognition \cite{RibaltaCIARP09,RibaltaCVIU11,RibaltaIJPRAI13}, graphics \cite{HuangSGP13} and information retrieval \cite{LiuVCG14}, among others. We divide the state-of-the-arts into two categories concerning how the affinity and matching consistency are explored.

i) \emph{affinity score}-driven approaches \cite{RibaltaIJPRAI13,YanICCV13,YanTIP15,GavrilJA87}: For these methods, usually first a compact set of basis pairwise matching variables are generated which can derive the matching solutions for each pair. Then an objective function regarding the overall pairwise matching affinity score is maximized by different algorithms. Sol\'{e}-Ribalta \emph{et al.} \cite{RibaltaIJPRAI13} extend the algorithm Graduated Assignment (GA) \cite{GoldJANN96} for two-graph case to multi-graph. Each graph is associated with an assignment matrix mapping the nodes to a virtual node set. Then the variable set is updated in a deterministic annealing manner to maximize the overall affinity score. Yan et al. \cite{YanICCV13,YanTIP15} adaptively find a reference graph $\mathcal{G}_r$, which induces a compact basis matching variable set $\{\textbf{X}_{kr}\}_{k=1,\neq r}^{N}$ over $N$ graphs. An alternating optimization framework is devised to update these basis variables in a rotating manner. The early work \cite{GavrilJA87} by Gavril \emph{et al.} selects a set of basis variables from the initial pairwise matching solutions via finding the maximum spanning tree on the super graph related to a maximized overall affinity score. In this sense, their work also falls into this category.

ii) \emph{pairwise matching consistency}-driven approaches \cite{RibaltaCIARP09,RibaltaCVIU11,PachauriNIPS13,ChenICML14}: these methods usually are comprised of two steps. First, a pairwise GM solver is employed to obtain the node mapping between all (or a portion of) pairs of graphs. Then, a spectrum smoothing technique is devised to enforce global matching consistency in the sense that two sequential pairwise matching in different chaining orders shall lead to the same solution. Specifically, in \cite{RibaltaCIARP09} and the journal version \cite{RibaltaCVIU11}, Sol\'{e}-Ribalta \emph{et al.} use a hyper-cube tensor to represent the $N$-node matching likelihood, each from $N$ different graphs. Then a greedy method is used to binarize the final solutions satisfying the one-to-one two way constraints. Pachauri \emph{et al.} \cite{RibaltaCVIU11} employ spectral analysis via eigenvector decomposition on the input pairwise matching solutions, and recover the consistent matching solutions. The success of this method lies on the assumption that the input pairwise matchings is corrupted by Gaussian-Wigner noise which is too ideal in reality. A more recent work proposed by Chen \emph{et al.} \cite{ChenICML14} is suitable when only a part of nodes can find their correspondences in other graphs, which they term as `partial similarity'. They formulate the problem into a tractable convex programming problem solved by the popular first-order Alternating Direction Method of Multipliers (ADMM) method tailored for their semidefinite programming problem.

One observation to the above mentioned approaches is that they either enforce consistency early by using a compact set of basis variables for pairwise matchings \cite{YanICCV13,YanTIP15,RibaltaCVIU11,RibaltaCIARP09}, which runs at the risk of propagating errors across iterations and graphs, or assume the initial pairwise matchings are obtained with corruptions by another method, and focus on improving the overall accuracy from the input pairwise matchings via spectral methods, however no affinity information is used in the procedure \cite{RibaltaCIARP09,RibaltaCVIU11,PachauriNIPS13,ChenICML14}.

In particular, a widely used multi-graph matching formulation \cite{YanICCV13,YanTIP15,RibaltaIJPRAI13} is as follows, under the implicit assumption of invertible pairwise matching relations \emph{i.e.} $\textbf{X}_{ij}=\textbf{X}^T_{ji}$, which meanwhile enforces matching consistency over all pairwise matchings:
\begin{footnotesize}
\begin{gather}\label{eq:constrained_multi_match_obj}
\{\textbf{X}^*_{ij}\}_{i=1,j=i+1}^{N-1,N} = \arg\max_{\textbf{X}_{ij}}\sum_{i=1}^{N-1}\sum_{j=i+1}^{N}\text{vec}(\textbf{X}_{ij})^T\textbf{K}_{ij}\text{vec}(\textbf{X}_{ij})\\\notag
s.t.\quad\textbf{I}^T_{n_i}\textbf{X}_{ij}=\textbf{1}^T_{n_j}, \textbf{X}_{ij}\textbf{I}_{n_j}=\textbf{1}_{n_i}, \textbf{X}_{ij}=\textbf{X}_{ik}\textbf{X}_{kj}\in \{0,1\}^{n_i\times n_j}{_{k=1,\neq i,j}^N}
\end{gather}
\end{footnotesize}
Note the above formulation assumes each graph shall contain and only contain the common inlier nodes, which does not hold in the presence of unmatchable outliers. Thus it calls for more flexible solutions.

In this paper, we adopt a flexible approximate affinity score boosting procedure, which is gradually regularized by the overall matching consistency. The key idea is that the affinity score is indicative to semantic matching accuracy in the early stage of iterative boosting, while it becomes less informative as the accuracy lifting saturates. Then, in the presence of a few outliers, consistency becomes a useful regularizer to avoid the degenerating case between the two local graphs due to the fact that affinity score is often biased to semantic accuracy when the two graphs are corrupted by arbitrary noises, in addition with the inherent difficulty in modeling the affinity matrix using a compact parametric model. In the presence of more outliers, we propose a node-wise consistency/affinity-driven mechanism to elicit the affinity score and consistency relevant to common inlier nodes. Extensive empirical results illustrate the efficacy of our methods which is further specified in our conclusive remarks in Sec.\ref{sec:conclusion}.
\section{Proposed algorithms}\label{sec:main_methods}
Given the initial pairwise matching configuration $\mathbb{X}^{(0)}$ generated by a pairwise matching solver \emph{e.g.} \cite{ChoECCV10,ZhouCVPR12}, one can further derive a new pairwise matching $\textbf{X}^{(1)}_{ij}$ by the composition of a few `good' matchings \emph{i.e.} $\textbf{X}^{(1)}_{ij}=\textbf{X}^{(0)}_{ik_1}\textbf{X}^{(0)}_{k_1k_2}\ldots\textbf{X}^{(0)}_{k_{s-1}k_s}\textbf{X}^{(0)}_{k_sj}$ to replace the original $\textbf{X}^{(0)}_{ij}$ such that the matching accuracy can be improved or ideally maximized, by means of interpolating the sequence $\mathcal{G}_i, \mathcal{G}_{k_1}, \cdots, \mathcal{G}_{k_{s-1}}, \mathcal{G}_{k_s},\mathcal{G}_j$. The problem of this dynamic-replacing framework is how to set up the appropriate compositional replacements. In case the original matching is best, the replacement set shall be empty.

This paper devises a mechanism for iteratively finding the appropriate new composition to replace the old one without knowing their true accuracy. We will start with a baseline ISB (Alg.\ref{alg:ISB}), and two variants ISB$^{\text{2nd}}$, ISB$^{\text{cst}}$ for comparison. Then a graduated consistency-regularized method ISB-GC (Alg.\ref{alg:ISB-GC}) is highlighted with two efficient variants ISB-GC-U/P (Alg.\ref{alg:ISB-EGC}). Before diving into the details, we first introduce several notations and definitions which help the exposition of our methods.
\subsection{Notations and definitions}
Throughout the paper, $\mathbb{R}$ denotes for the real number domain. Bold capital letters denote for a matrix $\textbf{X}$, bold lower-case letters for a column vector $\textbf{x}$, and hollow bold letters for a set $\mathbb{X}$. All non-bold letters represent scalars. $\textbf{X}^T$ is the transpose of $\textbf{X}$; vec($\textbf{X}$) is the column-wise vectorized matrix $\textbf{X}$. tr($\textbf{X}$) is the trace of matrix $\textbf{X}$. $\textbf{I}_n \in \mathbb{R}^{n\times n}$ is an identity matrix and the subscript $n$ will be omitted when it can be inferred from context. $\textbf{1}_{m\times n},\textbf{0}_{m\times n}\in \mathbb{R}^{m\times n}$ is the matrix whose elements being all ones or zeros, respectively. $\textbf{1}_{n}$ is the abbreviation for $\textbf{1}_{n\times 1}$. $|\mathbb{X}|$ is the cardinality of the set $\mathbb{X}$, $\|\textbf{X}\|_F=\text{tr}(\textbf{X}^T\textbf{X})$ for the Frobenious norm and $\|\cdot\|_p$ is the $p$-norm.

This paper intensively uses $\mathbb{X}=\{\textbf{X}_{ij}\}_{i=1,j=i+1}^{N-1,N}$ to denote the set of pairwise matching matrices over $N$ graphs $\{\mathcal{G}_k\}_{k=1}^N$, which is also termed as \emph{matching configuration}, and $\mathbb{K}=\{\textbf{K}_{ij}\}_{i,j=1}^N$ is used for the affinity matrix set. In particular, we use $\mathbb{X}_c$ to denote a solution set which is fully consistent (referred by the subscript `c') as there is no contradictory pairwise matchings for $\{\textbf{X}_{ij}\neq\textbf{X}_{ik}\textbf{X}_{kj}\}_{i,j,k=1}^N$. If not otherwise stated explicitly, we adopt the convention that uses $N$ for the number of considered graphs, $n$ and $m$ for the number of nodes, and the number of edges in a graph, respectively.

So far we have not presented a formal definition regarding consistency, which has been widely used in \cite{YanICCV13,YanTIP15,ChenICML14,RibaltaCVIU11,PachauriNIPS13} and mentioned earlier in this paper. Now we give the formal definitions regarding matching consistency induced by the initial pairwise matching $\mathbb{X}$ from any pairwise matching solver. In addition, we also define two variants of the super graph and a concept related to matching composition. This paper does not claim the credit for the novelty of these definitions as similar definitions or concepts may appear in literature.
\begin{definition}\label{def:single_consist}
Given $N$ graphs $\{\mathcal{G}_k\}_{k=1}^N$ and the pairwise matching configuration $\mathbb{X}=\{\textbf{X}_{ij}\}_{i=1,j=i+1}^{N-1,N}$, the unary consistency of graph $\mathcal{G}_k$ is defined as $C_u(k,\mathbb{X})=1-\frac{\sum_{i=1}^{N-1}\sum_{j=i+1}^{N}\|\textbf{X}_{ij}-\textbf{X}_{ik}\textbf{X}_{kj}\|_F/2}{nN(N-1)/2}\in(0,1]$.
\end{definition}
\begin{definition}\label{def:pairwise_consist}
Given graphs $\{\mathcal{G}_k\}_{k=1}^N$ and matching configuration $\mathbb{X}$, for any pair $\mathcal{G}_i$ and $\mathcal{G}_j$, the pairwise consistency is defined as $C_{p}(\textbf{X}_{ij},\mathbb{X})=1-\frac{\sum_{k=1}^{N}\|\textbf{X}_{ij}-\textbf{X}_{ik}\textbf{X}_{kj}\|_F/2}{nN}\in(0,1]$.
\end{definition}
\begin{definition}\label{def:full_consist}
Given $N$ graphs $\{\mathcal{G}_k\}_{k=1}^N$ and $\mathbb{X}$, we call $\mathbb{X}$ is fully consistent \mbox{if and only if} $\frac{\sum_{i=1,j=i+1}^{N-1,N}C_{p}(\textbf{X}_{ij},\mathbb{X})}{N(N-1)/2}=1$ and (or) $\frac{\sum_{k=1}^{N}C_u(k,\mathbb{X})}{N}=1$. The following equation always holds: $\frac{\sum_{k=1}^{N}C_u(k,\mathbb{X})}{N}=\frac{\sum_{i=1,j=i+1}^{N-1,N}C_{p}(\textbf{X}_{ij},\mathbb{X})}{N(N-1)/2}$. We further define the overall consistency of $\mathbb{X}$ as $C(\mathbb{X})=\frac{\sum_{k=1}^{N}C_u(k,\mathbb{X})}{N}\in(0,1]$.
\end{definition}
\begin{definition}\label{def:node_consist}
Given $\{\mathcal{G}_k\}_{k=1}^N$ and $\mathbb{X}$, for node $\{\mathcal{N}_{u^k}\}_{u^k=1}^n$ in graph $\mathcal{G}_k$, its consistency w.r.t. $\mathbb{X}$ is defined by $C_\text{n}(u^k,\mathbb{X})=1-\frac{\sum_{i=1}^{N-1}\sum_{j=i+1}^{N}\|\textbf{Y}(u^k,:)\|_F/2}{N(N-1)/2}\in(0,1]$ where $\textbf{Y}=\textbf{X}_{kj}-\textbf{X}_{ki}\textbf{X}_{ij}$ and $\textbf{Y}(u^k,:)$ is the $u^k$th row of matrix $\textbf{Y}$.
\end{definition}
\begin{definition}\label{def:node_affinity}
Given $\{\mathcal{G}_k\}_{k=1}^N$, $\mathbb{X}$, $\mathbb{K}$, for node $\{\mathcal{N}_{u^k}\}_{u^k=1}^n$ in $\mathcal{G}_k$, its affinity w.r.t. $\mathbb{X}$ and $\mathbb{K}$ is defined by $S_n(u^k,\mathbb{X},\mathbb{K})=\sum_{i=1,\neq k}^N\text{vec}(\textbf{X}_{ki}^{u^k})^T\textbf{K}_{ki}\text{vec}(\textbf{X}_{ki})$, where $\textbf{X}^{u^k}$ denotes the matrix $\textbf{X}$ with zeros except for the $u^k$-th rows as is.
\end{definition}
\begin{definition}\label{def:super_affinity_graph}
Given $\{\mathcal{G}_k\}_{k=1}^{N}$ and $\mathbb{X}$, the affinity-wise super graph $\mathcal{G}^a_{sup}$ is defined as an undirected weighted graph s.t. each node $k$ denotes $\mathcal{G}_k$, and its edge weight $e_{ij}$ is the affinity score $J_{ij}(\textbf{X}_{ij})=\text{vec}(\textbf{X}_{ij})^T\textbf{K}_{ij}\text{vec}(\textbf{X}_{ij})$ induced by $\mathbb{X}$\footnote{In general $J_{ij}(\textbf{X}_{ij})\neq J_{ji}(\textbf{X}_{ji})$ and \cite{RibaltaSSPR10,RibaltaIJPRAI13} also add $J_{ij}(\textbf{X}_{ij})$ into the objective. While this setting convention is not related to the essence of our methods, thus for efficiency we use one-way affinity score to set the affinity-wise super graph and the objective in Eq.\ref{eq:overall_objective}.}.
\end{definition}
\begin{definition}\label{def:super_consistency_graph}
Given $\{\mathcal{G}_k\}_{k=1}^{N}$ and $\mathbb{X}$, the consistency-wise super graph $\mathcal{G}^c_{sup}$ is defined as an undirected weighted graph s.t. each node $k$ denotes $\mathcal{G}_k$, and the weight of edge $e_{ij}$ is the pairwise consistency  $C_{p}(\textbf{X}_{ij},\mathbb{X})=1-\frac{\sum_{k=1}^{N}\|\textbf{X}_{ij}-\textbf{X}_{ik}\textbf{X}_{kj}\|_F/2}{nN}$.
\end{definition}
\begin{definition}\label{def:score_path}
The path ${Z}_{ij}(k_1,k_2,\ldots,k_s)$ from $\mathcal{G}_i$ to $\mathcal{G}_j$ is defined as the chain $\mathcal{G}_i$$\rightarrow$$\mathcal{G}_{k_1}$$\rightarrow$$ \cdots$$\rightarrow $$\mathcal{G}_{k_s}$$\rightarrow$$\mathcal{G}_{j}$ which induces: $\textbf{Y}_{ij}(k_1,k_2,\ldots,k_s) \triangleq \textbf{X}_{ik_1}\textbf{X}_{k_1k_2}\ldots\textbf{X}_{k_sj}$.
Its order $s_{ij}$ is the number of the intermediate graphs between $\mathcal{G}_i$, $\mathcal{G}_j$. The path score is $J_{ij}(k_1,k_2,\ldots,k_s)=$\text{vec}$(\textbf{Y}_{ij})^T\textbf{K}_{ij}$\text{vec}$(\textbf{Y}_{ij})$.
\end{definition}
We present several comments to the above definitions. The unary consistency $C_u(k,\mathbb{X})$ by Definition (\ref{def:single_consist}) is a function \emph{w.r.t.} the graph index $k$ and $\mathbb{X}$. In contrast, the pairwise consistency $C_p(\textbf{X}_{ij},\mathbb{X})$ by Definition (\ref{def:pairwise_consist}) is generalized to the one \emph{w.r.t.} two variables of $\textbf{X}_{ij}$ and $\mathbb{X}$ and the former one $\textbf{X}_{ij}$ does not necessarily belong to $\mathbb{X}$. The pairwise consistency metric is symmetric such that $C_{p}(\textbf{X}_{ij},\mathbb{X}) = C_{p}(\textbf{X}_{ji},\mathbb{X})$, which derives Definition (\ref{def:full_consist}). Definition (\ref{def:node_consist}) and (\ref{def:node_affinity}) break down to node level consistency/affinity similar to unary ones, and they will be used in the proposed inlier eliciting mechanism.

The super graph, either for affinity-wise $\mathcal{G}^a_{sup}$ by Definition (\ref{def:super_affinity_graph}), or consistency-wise $\mathcal{G}^c_{sup}$ by Definition (\ref{def:super_consistency_graph}), is a connected (not necessarily fully connected) graph as a portion of pairs are matched by a certain means, a \emph{maximum spanning tree} (MST) \cite{GavrilJA87} can be found with no less total weight of every other spanning tree.

Note Definition (\ref{def:single_consist}) and Definition (\ref{def:pairwise_consist}) appeared in \cite{YanTIP15} in which they are used to identify the optimal alternating optimization sequence of variables. This paper utilize them differently to infuse the consistency along with the boosting procedure. The overall consistency for $\mathbb{X}$ as by Definition (\ref{def:full_consist}), further tells the relation between unary consistency and pairwise consistency. A similar overall `inconsistency' metric appears in \cite{RibaltaSSPR10,RibaltaIJPRAI13}.

The affinity-wise super graph $\mathcal{G}^a_{sup}$ by Definition (\ref{def:super_affinity_graph}) is similar to the one in \cite{CharpiatICCVW09}, of which the author proposes to build a graph where each shape is a vertex, and edges between shapes are weighted by the cost of the best matching. We put it in the scenario for the problem of weighted multi-graph matching. Furthermore, we give a similar definition for a consistency variant $\mathcal{G}^a_{sup}$ in terms of consistency-wise edge weights by Definition (\ref{def:super_consistency_graph}).

Finally, Definition (\ref{def:score_path}) is related to the approximate path selection idea as used in our algorithms. Obviously, when $\mathbb{X}$ is fully consistent, any two paths between two given graphs would induce the equal solution.

Based on the above definitions and discussion, we present our main approaches in the rest of this section.
\subsection{Iterative affinity score/consistency boosting}\label{subsec:affinity_boosting}
Most GM methods aim to maximize an objective function regarding with affinity score \cite{YanICCV13,YanTIP15,ChoECCV10,ZhouCVPR12,LeordeanuNIPS09}, we also follow this methodology in this subsection. Our basic rationale is that the node matching regarding with the highest affinity score between two graphs can be found along a higher-order path, as related to Definition (\ref{def:score_path}), instead of the direct (zero-order) pairwise matching. We formalize this idea as follows:

Without loss of generality, assume the affinity-wise super graph $\mathcal{G}^a_{sup}$ is fully connected which is induced by the configuration $\mathbb{X}$. Given $\mathcal{G}_i$, $\mathcal{G}_j$, all loop-free matching pathes on the super graph can form a set of loop-free paths $\mathbb{Z}_{ij}=\{Z_{ij}(k_1,\ldots,k_s)\}_{s=1}^{N-2}$, whose cardinality $|\mathbb{Z}_{ij}| = \sum_{s=1}^{N-2}s!$. Thus the overhead for finding the highest score solution is intractable since $\sum_{s=1}^{N-2}s!$ times of compositions need be computed. Given the matching configuration $\mathbb{X}^{(t-1)}$, one alternative is approximating the optimal $Z_{ij}^*$ by a series of iterations involving the first-order paths. And the problem of finding the optimal third-party graph $\mathcal{G}_k$ in iteration $t$ becomes:
\begin{small}
\begin{equation}\label{eq:score_max_compact}
k^*=\arg\max_{k=1}^NJ(\textbf{X}^{(t-1)}_{ik}\textbf{X}^{(t-1)}_{kj})
\end{equation}
\end{small}
This can be regarded as approximately concatenating the iteratively generated piecewise pathes, into a higher-order one that consists of multiple intermediate graphs.

The above idea is concretized into an iterative algorithm termed as \textbf{Iterative Score Boosting (ISB)} as described in the chart of Alg.\ref{alg:ISB}. In iteration $t$, every $\textbf{X}^{(t)}_{ij}$ is updated by seeking the path with order $s=1$ via maximizing the affinity score according to Eq.\ref{eq:score_max_compact}. The efficacy of such a score-boosting strategy can be justified by an intuitive analysis: even matching graph $\mathcal{G}_i$ and $\mathcal{G}_j$ is inherently ambiguous when both are largely corrupted, it is still possible to improve matching accuracy by an intermediate graph $\mathcal{G}_k$ that can find the quality pairwise matches between $\{\mathcal{G}_i,\mathcal{G}_k\}$ and $\{\mathcal{G}_k,\mathcal{G}_j\}$ respectively. The following proposition depicts the convergence of Alg.\ref{alg:ISB}.
\begin{proposition}\label{prop:ISB_convergence}
Alg.\ref{alg:ISB} (ISB) is ensured to converge to a stationary configuration $\mathbb{X}^*$ after a finite number of iterations.
\end{proposition}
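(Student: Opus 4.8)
The plan is to exhibit a scalar potential function that is monotone along the iterations of ISB and takes only finitely many values, so that the iteration must stabilize. The natural candidate is the total affinity score $\Phi(\mathbb{X}) = \sum_{i=1}^{N-1}\sum_{j=i+1}^{N} J_{ij}(\textbf{X}_{ij}) = \sum_{i<j}\text{vec}(\textbf{X}_{ij})^T\textbf{K}_{ij}\text{vec}(\textbf{X}_{ij})$. First I would observe that in iteration $t$ each $\textbf{X}^{(t)}_{ij}$ is chosen by the rule in Eq.\ref{eq:score_max_compact} among the candidates $\{\textbf{X}^{(t-1)}_{ik}\textbf{X}^{(t-1)}_{kj}\}_{k=1}^{N}$, and the index $k=j$ (or $k=i$) recovers the previous value $\textbf{X}^{(t-1)}_{ij}$ itself, since $\textbf{X}^{(t-1)}_{ij}\textbf{X}^{(t-1)}_{jj}=\textbf{X}^{(t-1)}_{ij}$. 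Hence the $\arg\max$ never decreases the per-pair score: $J_{ij}(\textbf{X}^{(t)}_{ij}) \ge J_{ij}(\textbf{X}^{(t-1)}_{ij})$ for every pair $(i,j)$. Summing over all pairs gives $\Phi(\mathbb{X}^{(t)}) \ge \Phi(\mathbb{X}^{(t-1)})$, so the sequence $\{\Phi(\mathbb{X}^{(t)})\}_t$ is nondecreasing.

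Next I would bound the range of $\Phi$. Every $\textbf{X}_{ij}$ lives in the finite set of $n\times n$ permutation matrices, and every composition of permutation matrices is again a permutation matrix, so each $\textbf{X}^{(t)}_{ij}$ stays inside this finite set throughout; consequently $\Phi$ takes values in a finite set, in particular it is bounded above (e.g. by $\sum_{i<j}\|\textbf{K}_{ij}\|$ times a constant). A nondecreasing sequence taking values in a finite set must become constant after finitely many steps: there exists $T$ with $\Phi(\mathbb{X}^{(t)}) = \Phi(\mathbb{X}^{(T)})$ for all $t\ge T$.

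It remains to pass from stabilization of the scalar $\Phi$ to stabilization of the configuration $\mathbb{X}^{(t)}$ itself. Here one must be slightly careful, since a priori the per-pair scores could oscillate among distinct matrices realizing the same total. I would handle this by fixing, once and for all, a deterministic tie-breaking rule in the $\arg\max$ of Eq.\ref{eq:score_max_compact} — e.g. smallest index $k$, with $k=j$ given priority so that the incumbent is retained whenever it already attains the maximum. With this convention, once $\Phi$ has stabilized every per-pair score $J_{ij}$ has stabilized (each is nondecreasing and their sum is constant, forcing each to be constant), so from step $T$ onward the incumbent $\textbf{X}^{(t)}_{ij}$ already attains the per-pair maximum and is therefore retained by the tie-breaking rule; thus $\mathbb{X}^{(t+1)} = \mathbb{X}^{(t)}$ for all $t\ge T$, i.e. $\mathbb{X}^* := \mathbb{X}^{(T)}$ is a stationary configuration (a fixed point of the update map) reached in finitely many iterations. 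The variant ISB$^{\text{cst}}$ that boosts consistency rather than affinity is treated identically with $\Phi$ replaced by $C(\mathbb{X})$, using Definition~\ref{def:full_consist} and the fact that $C$ also ranges over a finite set.

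The step I expect to be the main obstacle is exactly this last one: the monotonicity-plus-finiteness argument only pins down the value of the potential, and showing that the \emph{configuration} (not just its score) is eventually fixed requires the tie-breaking stipulation together with the observation that a constant sum of individually nondecreasing nonnegative increments forces each increment to vanish. If one does not wish to assume deterministic tie-breaking, the honest conclusion is weaker — the score converges and the configuration visits a finite set of equal-score states — and the proposition should be read as asserting convergence of the objective, which is all that is needed for the algorithmic use downstream.
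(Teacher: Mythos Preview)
Your main argument is correct and follows essentially the same route as the paper: each per-pair score $J_{ij}(\textbf{X}^{(t)}_{ij})$ is nondecreasing (because the incumbent is always among the candidates) and takes values in the finite set $\{\text{vec}(\textbf{Y})^T\textbf{K}_{ij}\text{vec}(\textbf{Y}):\textbf{Y}\in\mathbb{P}_n\}$, hence stabilizes. The paper's proof is in fact terser than yours and simply stops at ``bounded monotone, hence converges''; your discussion of tie-breaking to pass from score stabilization to configuration stabilization is a genuine refinement that the paper does not spell out.

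However, your closing remark about ISB$^{\text{cst}}$ is wrong, and the paper explicitly says so. The consistency score $C_p(\textbf{X}_{ij},\mathbb{X})$ depends on the \emph{entire} configuration $\mathbb{X}$, not just on $\textbf{X}_{ij}$. The update in Eq.~\ref{eq:consistency_max_compact} guarantees only $C_p(\textbf{X}^{(t)}_{ij},\mathbb{X}^{(t-1)})\ge C_p(\textbf{X}^{(t-1)}_{ij},\mathbb{X}^{(t-1)})$; once all pairs are updated the reference configuration shifts from $\mathbb{X}^{(t-1)}$ to $\mathbb{X}^{(t)}$, and there is no reason for $C_p(\textbf{X}^{(t)}_{ij},\mathbb{X}^{(t)})\ge C_p(\textbf{X}^{(t-1)}_{ij},\mathbb{X}^{(t-1)})$ to hold. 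Thus neither the per-pair consistencies nor $C(\mathbb{X})$ form a monotone sequence, and your potential-function argument does not transfer. This is precisely why the affinity case works and the consistency case does not: $J_{ij}$ depends only on $\textbf{X}_{ij}$ and $\textbf{K}_{ij}$, so updating other pairs cannot spoil it.
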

\begin{proof}
For each $J(\textbf{X}_{ij})$, it forms a non-descending sequence over iterations which is bounded by the upper bound $\widetilde{J}_{ij}=\max\{\text{vec}(\textbf{Y})^T\textbf{K}_{ij}\text{vec}(\textbf{Y}), \textbf{Y}\in\mathbb{P}_n\}$ in the discrete permutation space $\mathbb{P}_n$: $J(\textbf{X}^{(0)}_{ij})\leq J(\textbf{X}^{(1)}_{ij})\leq\ldots\leq J(\textbf{X}^{(t)}_{ij})\leq\ldots\leq\widetilde{J}_{ij}$. Thus $\mathbb{X}^{(t)}$ will converge.
\end{proof}

In fact, Alg.\ref{alg:ISB} (ISB) cannot guarantee the convergent $\mathbb{X}^*$ satisfy full consistency \emph{i.e.} $C(\mathbb{X}^*)=1$ by Definition (\ref{def:full_consist}). Thus, after the iteration procedure, a post step for enforcing full consistency is \emph{optionally} performed. In our heuristical implementation, when the resultant $C(\mathbb{X}^*)$ is less than a given threshold $\gamma$, which suggests there are many contradictory matchings $\textbf{X}_{ij}\neq\textbf{X}_{ik}\textbf{X}_{kj}$, thus the consistency is too low to be informative. Then the affinity-wise super graph $\mathcal{G}^a_{sup}$ by Definition (\ref{def:super_affinity_graph}) is built to generate the final fully consistent $\mathbb{X}^*_c$ in the hope that affinity score might be more indicative. Otherwise, we utilize consistency to smooth the final solution in two cases: i) when $N>n$ \emph{i.e.} the number of graphs is larger than the number of nodes, then the method in \cite{PachauriNIPS13} is adopted to obtain $\mathbb{X}^*_c$; ii) otherwise, the consistency-wise super graph $\mathcal{G}_c$ by Definition (\ref{def:super_consistency_graph}) is built to obtain $\mathbb{X}^*_c$. We apply this policy in all our algorithms for post-processing to achieve full-consistency. Yet how to realize full consistency in post-step is not the focus of this paper which has also been addressed by other work such as \cite{PachauriNIPS13,ChenICML14}. Moreover, in the presence of a considerable number of outliers, enforcing the full consistency will cause performance degeneration due to the unmatchable outliers as will be shown in our experiments.

In order to obtain a comprehensive view of the affinity boosting model, we consider two additional variants of ISB (Alg.\ref{alg:ISB}). One uses the pairwise consistency instead of affinity to update solutions. The other replaces the first-order path selection with a second-order one. In fact, there can be other random search strategies aside from the fist-order one described here, and these random search variants can be derived by our framework.

The first variant involves replacing the updating step $\textbf{X}^{(t)}_{ij}=\textbf{X}^{(t-1)}_{ik}\textbf{X}^{(t-1)}_{kj}$ using Eq.\ref{eq:score_max_compact} with the pairwise consistency in Definition (\ref{def:pairwise_consist}), as formulated as follows:
\begin{small}
\begin{equation}\label{eq:consistency_max_compact}
k^*=\arg\max_{k=1}^NC_{p}(\textbf{X}^{(t-1)}_{ik}\textbf{X}^{(t-1)}_{kj},\mathbb{X}^{(t-1)})
\end{equation}
\end{small}

Unlike ISB, this consistency-driven variant ISB$^{\text{cst}}$ (note superscript added) cannot ensure to converge to a stationary configuration. In particular, the consistency boosting mechanism at each round, can only guarantee $C_p(\textbf{X}^{(t)}_{ij},\mathbb{X}^{(t-1)})\geq C_p(\textbf{X}^{(t-1)}_{ij},\mathbb{X}^{(t-1)})$ for any pair of $i$, $j$. Nevertheless, it cannot ensure $C_p(\textbf{X}^{(t)}_{ij},\mathbb{X}^{(t)})\geq C_p(\textbf{X}^{(t-1)}_{ij},\mathbb{X}^{(t-1)})$. Thus the non-decreasing property $C_p(\textbf{X}^{(0)}_{ij},\mathbb{X}^{(0)})\leq C_p(\textbf{X}^{(1)}_{ij},\mathbb{X}^{(1)})\leq\ldots$ does not hold, although they are bounded by $C_{p}(\textbf{X}^{(t)}_{ij},\mathbb{X}^{(t)})\leq1$. For similar reasons, one cannot ensure $C(\mathbb{X}^{(t)})\geq C(\mathbb{X}^{(t-1)})$ where $C(\cdot)$ is defined in Definition (\ref{def:full_consist}). As the solution space is discrete and finite, ISB$^{\text{cst}}$ either converges to a stationary point or forms a looping solution path. The former case is much more often observed in our tests.

The 2nd-order variant ISB$^{\text{2nd}}$ is derived by replacing $\textbf{X}^{(t)}_{ij}=\textbf{X}^{(t-1)}_{ik}\textbf{X}^{(t-1)}_{kj}$ via $\textbf{X}^{(t)}_{ij}=\textbf{X}^{(t-1)}_{iv}\textbf{X}^{(t-1)}_{vu}\textbf{X}^{(t-1)}_{uj}$:
\begin{small}
\begin{equation}\label{eq:score_max_2nd}
u^*,v^*=\arg\max_{u,v=1}^N\textbf{y}^T\textbf{K}_{ij}\textbf{y},\quad \textbf{y}=\text{vec}(\textbf{X}^{(t-1)}_{iv}\textbf{X}^{(t-1)}_{vu}\textbf{X}^{(t-1)}_{uj})
\end{equation}
\end{small}
The 2nd-order method has more exploration capability than the first-order one, at the expense of growing searching space from $O(N)$ to $O(N^2)$ in terms of traversing the rest graphs (or graph pairs for the 2nd-order case). As the same with ISB, the 2nd-order method can also guarantee convergence due to its score boosting property. These three methods are evaluated in Fig.\ref{fig:case_study} together with Alg.\ref{alg:ISB-GC} (ISB-GC) as will be introduced later. Moreover, in the ideal case when the ground truth is know, then we modify the method ISB by boosting the true pairwise accuracy instead of the affinity score for iterative updating. It is termed as ISB$^{\text{acc}}$ and shown in all the plots which can be regarded as the accuracy upper-bound given the initial $\mathbb{X}^{(0)}$.

One can see that ISB outperforms ISB$^{\text{cst}}$ notably. This is because ISB$^{\text{cst}}$ is based on the assumption that correct node-to-node matchings are dominant and the meaningful correspondences can be realized along many different paths of maps. This assumption would break given an unsatisfactory initial $\mathbb{X}^{(0)}$, which in turn calls for an affinity boosting step to satisfy the assumption. For ISB$^{\text{2nd}}$, it outperforms the first-order method, while its overhead becomes significantly larger when more graphs are considered. We leave the two methods Alg.\ref{alg:ISB-GC} (ISB-GC) and its variant ISB-GC$^{\text{inv}}$ which also both appear in Fig.\ref{fig:case_study} to the next sub-section.
\begin{algorithm}[tb]
\small
  \caption{\small Iterative Score boosting (ISB) \normalsize}
  \label{alg:ISB}
  \begin{algorithmic}[1]
    \REQUIRE $\{\textbf{K}_{ij}\}_{i=1,j=i+1}^{N-1,N}$, $T_{max}$, $\delta$, $\gamma\in(0,1)$;
      \STATE Perform pairwise matching to obtain initial $\mathbb{X}^{(0)}$;
      \STATE Calculate $J^{(0)}=\sum_{i=1,j=i+1}^{N-1,N}{\text{vec}(\textbf{X}_{ij}^{(0)})}^T\textbf{K}_{ij}\text{vec}(\textbf{X}^{(0)}_{ij})$;
    \FOR {$t=1:T_{max}$}
        \FORALL {$i=1,2,\dots,N-1;j=i+1,\ldots,N$}
        \STATE update $\textbf{X}^{(t)}_{ij}=\textbf{X}^{(t-1)}_{ik}\textbf{X}^{(t-1)}_{kj}$ by solving Eq.\ref{eq:score_max_compact};
        \ENDFOR
        \STATE \textbf{if} $\sum_{i=1,j=i+1}^{N-1,N}\|\textbf{X}^{(t-1)}_{ij}-\textbf{X}^{(t)}_{ij}\|_2<\delta$, \textbf{break};
    \ENDFOR
    \STATE \textbf{if} $C(\mathbb{X}^{(t)})=1$, \textbf{return} $\mathbb{X}^*_c=\mathbb{X}^{(t)}$;
    \IF {$C(\mathbb{X}^{(t)})<\gamma$}\label{alg:isl:enforce_consistency_begin}
    \STATE Build the super graph $\mathcal{G}^a_{sup}$ by pairwise affinity score $J(\textbf{X}^{(t)}_{ij})$ and find a maximum span tree to generate $\mathbb{X}^*_c$;
    \ELSE
        \IF{$n\geq N$}
        \STATE Build the super graph $\mathcal{G}^c_{sup}$ by $C_p(\textbf{X}^{(t)}_{ij},\mathbb{X}^{(t)})$ and find a maximum span tree to generate $\mathbb{X}^*_c$;
        \ELSE
        \STATE Perform the smoothing method \cite{PachauriNIPS13} to obtain $\mathbb{X}^*_c$;
        \ENDIF
    \ENDIF\label{alg:isl:enforce_consistency_end}
  \end{algorithmic}
\end{algorithm}
\begin{algorithm}[tb]
\small
  \caption{\small Graduated Consistency-regularized Iterative Score Boosting (ISB-GC)\normalsize}
  \label{alg:ISB-GC}
  \begin{algorithmic}[1]
    \REQUIRE $\{\textbf{K}_{ij}\}_{i=1,j=i+1}^{N-1,N}$; $T_{max}$, $\delta$, $\lambda^{(T_0)}=\lambda^{(0)}$, $s$, $\gamma$;
      \STATE Perform pairwise matching to obtain initial $\mathbb{X}^{(0)}$;
      \STATE Calculate $J^{(0)}=\sum_{i=1,j=i+1}^{N-1,N}{\text{vec}(\textbf{X}_{ij}^{(0)})}^T\textbf{K}_{ij}\text{vec}(\textbf{X}^{(0)}_{ij})$;
    \FOR {$t=1:T_{0}$}\label{alg:gcisl:pure_affinity_begin}
        \FORALL {$i=1,2,\dots,N-1;j=i+1,\ldots,N$}
        \STATE update $\textbf{X}^{(t)}_{ij}=\textbf{X}^{(t-1)}_{ik}\textbf{X}^{(t-1)}_{kj}$ by solving Eq.\ref{eq:score_max_compact};
        \ENDFOR
    \ENDFOR\label{alg:gcisl:pure_affinity_end}
    \FOR {$t=T_{0}+1:T_{max}$}
        \FORALL {$i=1,2,\dots,N-1;j=i+1,\ldots,N$}
        \STATE update $\textbf{X}^{(t)}_{ij}=\textbf{X}^{(t-1)}_{ik}\textbf{X}^{(t-1)}_{kj}$ by solving Eq.\ref{eq:score_consistency_max};
        \ENDFOR
        \STATE \textbf{if} $\sum_{i=1,j=i+1}^{N-1,N}\|\textbf{X}^{(t-1)}_{ij}-\textbf{X}^{(t)}_{ij}\|_2<\delta$, \textbf{break};
        \STATE $\lambda^{(t)}=min(1,\beta\lambda^{(t-1)}$);
    \ENDFOR
    \STATE Perform the same post-processing as in Alg.\ref{alg:ISB} (L\ref{alg:isl:enforce_consistency_begin}-\ref{alg:isl:enforce_consistency_end}).
  \end{algorithmic}
\end{algorithm}
\subsection{Graduated consistency-regularized boosting}\label{subsec:consistency_boosting}
Our key rationale is viewing the consistency constraint as a \emph{regularizer} for affinity score maximization. Note that maximizing pairwise matching score among all pairs cannot ensure the consistency constraint. Moreover, due to outliers, sparse edge sampling for efficiency, and local deformation as well as the difficulty in parameterizing the affinity matrix as already discussed in \cite{YanTIP15} and the references therein, there can be a case that for some pairs of graphs, the semantic ground truth matching configuration may not correspond to the highest affinity score. Thus purely maximizing the overall matching score is biased to accuracy, though maximizing overall consistency alone is even more biased as Alg.\ref{alg:ISB}$^{\text{cst}}$ which has been studied in the previous sub-section. Informally speaking, this is analogous to loss function modeling in machine learning, where one not only considers empirical loss on the training dataset, but also introduces regularizer to account for over-fitting.

One shall note as a baseline method, ISB separates affinity score maximization and consistency smoothing into two independent steps. It is yet appealing to tackle the two aspects jointly. We make the following statements, for devising an effective algorithm that gradually introduces consistency during the score-boosting procedure. It is motivated by two observations: i) For the initial assignment matrix $\mathbb{X}^{(0)}$ obtained by the pairwise graph matching solver, its scores are more informative for the true accuracy; ii) After several rounds of iterations for score-boosting, affinity score becomes less discriminative and consistency becomes more indicative.

In this spirit, we infuse the matching consistency by a weighted term, whose weight $\lambda$ is gradually increased. As a result, the evaluation function in each iteration is changed from Eq.\ref{eq:score_max_compact} to a weighted one\footnote{Note the affinity score is not directly comparable with either the unary or pairwise consistency as the latter fall in $[0,1]$ while the former is arbitrary depending on how the affinity matrix is set. In our implementation, the affinity score is further normalized by $J(\textbf{X}_{ij})=J(\textbf{X}_{ij})/\max_{ij}\{J(\textbf{X}^{(0)}_{ij})\}$ where the denominator is a constant. This convention is also used when $J^{\psi}$ is introduced in Sec.\ref{subsec:outlier_adapt}.} between Eq.\ref{eq:score_max_compact} and Eq.\ref{eq:consistency_max_compact} by setting $\textbf{Y}_{ikj}^{(t-1)}=\textbf{X}^{(t-1)}_{ik}\textbf{X}^{(t-1)}_{kj}$:
\begin{small}
\begin{equation}\label{eq:score_consistency_max}
k^*=\arg\max_{k=1}^N(1-\lambda)J(\textbf{Y}_{ikj}^{(t-1)})+\lambda C_{p}(\textbf{Y}_{ikj}^{(t-1)},\mathbb{X}^{(t-1)})
\end{equation}
\end{small}

The similar post-processing that enforces full consistency in Alg.\ref{alg:ISB} (ISB) can also be conducted here which will stop the score growing immediately. This method is detailed in the chart of Alg.\ref{alg:ISB-GC}: \textbf{Graduated Consistency-regularized Iterative Score Boosting (ISB-GC)}.

Similar to ISB$^{\text{cst}}$, ISB-GC in general cannot guarantee to converge to a stationary configuration, since the non-decreasing property of the sequence $\{C_p(\textbf{X}^{(t)}_{ij},\mathbb{X}^{(t)})\}_{t=0}^\infty$ cannot always hold.
Our empirical tests show this method can often converge to a stationary $\mathbb{X}^*$ after 10 iterations or so. In our experiments, we stop it when the number of iterations arrives a certain threshold $T_{max}$.

To make our study more comprehensive, like ISB$^{\text{cst}}$ to ISB, we further devise a counterpart to ISB-GC by swapping the role of consistency and affinity in this algorithm: the iterative updating is driven by choosing the anchor graph that lifts the pairwise consistency $C_p(\textbf{X}^{(t)}_{ij},\mathbb{X}^{(t-1)})$ as denoted by Eq.\ref{eq:consistency_max_compact} for the method ISB$^{\text{cst}}$. Meanwhile, the weight of the affinity score is gradually increased as the same as the consistency used in ISB-GC. Accordingly, the evaluation function becomes:
\begin{small}
\begin{equation}\label{eq:score_consistency_max}
k^*=\arg\max_{k=1}^N\lambda J(\textbf{Y}_{ikj}^{(t-1)})+(1-\lambda) C_{p}(\textbf{Y}_{ikj}^{(t-1)},\mathbb{X}^{(t-1)})
\end{equation}
\end{small}
We call this algorithm ISB-GC$^{\text{inv}}$ where the superscript denotes for `inverse'. The parameter settings $\lambda^0$ and $\beta$ are identical to ISB-GC. We omit the algorithm chart for ISB-GC$^{\text{inv}}$ since it is akin to ISB-GC. It is outperformed by ISB-GC as shown in Fig.\ref{fig:case_study} which validates our idea.
\begin{figure*}[tb]
\centering
\setlength{\abovecaptionskip}{0pt}
\setlength{\belowcaptionskip}{-10pt}
\subfigure{\label{fig:legend}
\includegraphics[width=0.96\textwidth]{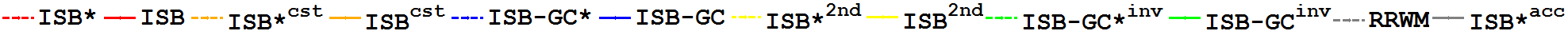}}\\\vspace{-10pt}
\setcounter{subfigure}{0}
\subfigure[Deform by $\varepsilon$]{\label{fig:random_case_vary_deform_rrwm_acc}
\includegraphics[width=0.19\textwidth]{{{performance_random_deform_RRWM_RRWM_respect_0.08_0.18_case_acc}}}}\hspace{-7pt}
\subfigure[Outlier by $n_o$]{\label{fig:random_case_vary_outlier_rrwm_acc}
\includegraphics[width=0.19\textwidth]{{{performance_random_outlier_RRWM_RRWM_respect_1_6_case_acc}}}}\hspace{-7pt}
\subfigure[Density by $\rho$]{\label{fig:random_case_vary_density_rrwm_acc}
\includegraphics[width=0.19\textwidth]{{{performance_random_density_RRWM_RRWM_respect_0.45_0.60_case_acc}}}}\hspace{-7pt}
\subfigure[Coverage by $c$]{\label{fig:random_case_vary_coverage_rrwm_acc}
\includegraphics[width=0.19\textwidth]{{{performance_random_complete_RRWM_RRWM_respect_0.05_0.30_case_acc}}}}\hspace{-7pt}
\subfigure[Outlier by $n_o$ (time)]{\label{fig:random_case_vary_outlier_rrwm_tim}
\includegraphics[width=0.19\textwidth]{{{performance_random_outlier_RRWM_RRWM_respect_1_6_case_tim}}}}\\\vspace{-5pt}
\subfigure[Deform by $N$]{\label{fig:random_case_vary_graph_cnt_deform_rrwm_acc}
\includegraphics[width=0.19\textwidth]{{{performance_random_deform_RRWM_RRWM_respect_case_all_acc}}}}\hspace{-7pt}
\subfigure[Outlier by $N$]{\label{fig:random_case_vary_graph_cnt_outlier_rrwm_acc}
\includegraphics[width=0.19\textwidth]{{{performance_random_outlier_RRWM_RRWM_respect_case_all_acc}}}}\hspace{-7pt}
\subfigure[Density by $N$]{\label{fig:random_case_vary_graph_cnt_density_rrwm_acc}
\includegraphics[width=0.19\textwidth]{{{performance_random_density_RRWM_RRWM_respect_case_all_acc}}}}\hspace{-7pt}
\subfigure[Coverage by $N$]{\label{fig:random_case_vary_graph_cnt_coverage_rrwm_acc}
\includegraphics[width=0.19\textwidth]{{{performance_random_complete_RRWM_RRWM_respect_case_all_acc}}}}\hspace{-7pt}
\subfigure[Deform by $N$ (time)]{\label{fig:random_case_vary_graph_cnt_deform_rrwm_tim}
\includegraphics[width=0.19\textwidth]{{{performance_random_deform_RRWM_RRWM_respect_case_all_tim}}}}\\
\caption{Comparison of ISB, ISB$^{\text{cst}}$, ISB$^{\text{2nd}}$, ISB-GC and ISB-GC$^{\text{inv}}$ on the synthetic random graph data by varying the noise (top row) and by the number of graphs (bottom row). RRWM \cite{ChoECCV10} is used as the pairwise matcher to generate the initial configuration (`RRWM'). `ISB*$^{\text{acc}}$' denotes the ideal `upper-bound' by boosting true accuracy rather than affinity or consistency. `ISB*' denotes ISB dismissing the step of L\ref{alg:isl:enforce_consistency_begin}-\ref{alg:isl:enforce_consistency_end}, so for other methods. Refer to Table.\ref{tab:case_study_setting} for the settings.}
\label{fig:case_study}
\end{figure*}
\begin{figure*}[tb]
\centering
\setlength{\abovecaptionskip}{0pt}
\setlength{\belowcaptionskip}{-10pt}
\subfigure{\label{fig:legend}
\includegraphics[width=0.8\textwidth]{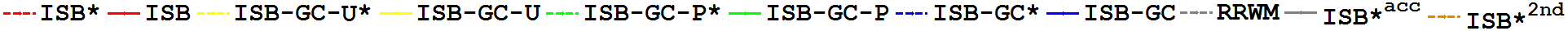}}\\\vspace{-10pt}
\setcounter{subfigure}{0}
\subfigure[Deform by $T_{max}$]{\label{fig:random_iter_deform_rrwm_acc}
\includegraphics[width=0.16\textwidth]{{{performance_random_deform_RRWM_RRWM_respect_iter_all_acc}}}}\hspace{-7pt}
\subfigure[Outlier by $T_{max}$]{\label{fig:random_iter_outlier_rrwm_acc}
\includegraphics[width=0.16\textwidth]{{{performance_random_outlier_RRWM_RRWM_respect_iter_all_acc}}}}\hspace{-7pt}
\subfigure[Density test by $T_{max}$]{\label{fig:random_iter_density_rrwm_acc}
\includegraphics[width=0.16\textwidth]{{{performance_random_density_RRWM_RRWM_respect_iter_all_acc}}}}\hspace{-7pt}
\subfigure[Coverage by $T_{max}$]{\label{fig:random_iter_coverage_rrwm_acc}
\includegraphics[width=0.16\textwidth]{{{performance_random_complete_RRWM_RRWM_respect_iter_all_acc}}}}\hspace{-7pt}
\subfigure[Outlier by $T_{max}$ (time)]{\label{fig:random_iter_outlier_rrwm_tim}
\includegraphics[width=0.16\textwidth]{{{performance_random_outlier_RRWM_RRWM_respect_iter_all_tim}}}}\hspace{-7pt}
\subfigure[By search sample rate]{\label{fig:random_sample_rate_rrwm_acc}
\includegraphics[width=0.16\textwidth]{{{performance_random_deform_RRWM_RRWM_respect_0.00_1.00_sample_acc}}}}\\\vspace{-5pt}
\caption{Performance on synthetic data by varying the iteration threshold $T_{max}$ for the proposed algorithms. Refer to Table.\ref{tab:iter_study_setting} for the corresponding settings.}
\label{fig:iter_study}
\end{figure*}
\begin{table}[t]
\center
\small
\caption{Settings for synthetic random graph test in Fig.\ref{fig:case_study} and Fig.\ref{fig:formal_syn_test}.}
\resizebox{0.47\textwidth}{!}{
\begin{tabular}{|rrr|}
  \hline
  x-axis&fixed parameters&results\\\hline
  $\varepsilon$=.08-.18&$N$=30,$n_{i}$=10,$n_{o}$=0,$\rho$=.9,$\sigma$=.05,$c$=1,$T_{max}$=6&Fig.\ref{fig:random_case_vary_deform_rrwm_acc}, Fig.\ref{fig:random_formal_vary_deform_rrwm_acc}\\
  $n_{o}$=1-6&$N$=30,$\varepsilon$=.05,$n_{i}$=8,$\rho$=1,$\sigma$=.05,$c$=1,$T_{max}$=6&Fig.\ref{fig:random_case_vary_outlier_rrwm_acc}, Fig.\ref{fig:random_formal_vary_outlier_rrwm_acc}\\
  $\rho$=.45-.6&$N$=30,$n_{i}$=10,$n_{o}$=0,$\varepsilon$=.05,$\sigma$=.05,$c$=1,$T_{max}$=6&Fig.\ref{fig:random_case_vary_density_rrwm_acc}, Fig.\ref{fig:random_formal_vary_density_rrwm_acc}\\
  c=.05-.3&$N$=30,$n_{i}$=10,$n_{o}$=0,$\varepsilon$=.05,$\rho=$1,$\sigma$=.05,$T_{max}$=6&Fig.\ref{fig:random_case_vary_coverage_rrwm_acc}, Fig.\ref{fig:random_formal_vary_coverage_rrwm_acc}\\\hline
  $N$=4-32&$n_{i}$=10,$n_{o}$=0,$\varepsilon$=.15,$\rho$=.9,$\sigma$=.05,$c$=1,$T_{max}$=6&Fig.\ref{fig:random_case_vary_graph_cnt_deform_rrwm_acc}, Fig.\ref{fig:random_formal_vary_graph_cnt_deform_rrwm_acc}\\
  $N$=4-32&$n_{i}$=6,$n_{o}$=4,$\varepsilon$=0,$\rho$=1,$\sigma$=.05,$c$=1,$T_{max}$=6&Fig.\ref{fig:random_case_vary_graph_cnt_outlier_rrwm_acc}, Fig.\ref{fig:random_formal_vary_graph_cnt_outlier_rrwm_acc}\\
  $N$=4-32&$n_{i}$=10,$n_{o}$=0,$\varepsilon$=0,$\rho$=.5,$\sigma$=.05,$c$=1,$T_{max}$=6&Fig.\ref{fig:random_case_vary_graph_cnt_density_rrwm_acc}, Fig.\ref{fig:random_formal_vary_graph_cnt_density_rrwm_acc}\\
  $N$=4-32&$n_{i}$=10,$n_{o}$=0,$\varepsilon$=.05,$\rho$=1,$\sigma$=.05,$c$=.1,$T_{max}$=6&Fig.\ref{fig:random_case_vary_graph_cnt_coverage_rrwm_acc}, Fig.\ref{fig:random_formal_vary_graph_cnt_coverage_rrwm_acc}\\\hline
\end{tabular}\label{tab:case_study_setting}}
\normalsize
\end{table}
\begin{table}[t]
\center
\small
\caption{Settings for synthetic random graph test in Fig.\ref{fig:iter_study}.}
\resizebox{0.47\textwidth}{!}{
\begin{tabular}{|rrrr|}
  \hline
  x-axis&test mode&fixed parameters&results\\\hline
  $T_{max}$=0-6&deform&$N$=20,$n_{i}$=10,$n_{o}$=0,$\varepsilon$=.15,$\rho$=.9,$c$=1,$\sigma$=.05&Fig.\ref{fig:random_iter_deform_rrwm_acc}\\
  $T_{max}$=0-6&outlier&$N$=20,$n_{i}$=6,$n_{o}$=4,$\varepsilon$=0,$\rho$=1,$c$=1,$\sigma$=.05&Fig.\ref{fig:random_iter_outlier_rrwm_acc}\\
  $T_{max}$=0-6&density&$N$=20,$n_{i}$=10,$n_{o}$=0,$\varepsilon$=0,$\rho$=.5,$c$=1,$\sigma$=.05&Fig.\ref{fig:random_iter_density_rrwm_acc}\\
  $T_{max}$=0-6&coverage&$N$=20,$n_{i}$=10,$n_{o}$=0,$\varepsilon$=.05,$\rho$=1,$c$=.1,$\sigma$=.05&Fig.\ref{fig:random_iter_coverage_rrwm_acc}\\
  sample rate=0-1&mixture&$N$=20,$n_{i}$=8,$n_{o}$=2,$\varepsilon$=.1,$\rho$=1,$c$=1,$\sigma$=.05&Fig.\ref{fig:random_sample_rate_rrwm_acc}\\
  \hline
\end{tabular}\label{tab:iter_study_setting}}
\normalsize
\end{table}
\subsection{Efficient consistency-regularized boosting}\label{subsec:efficient_consistency_boosting}
The step of computing $C_{p}(\textbf{X}^{(t-1)}_{ik}\textbf{X}^{(t-1)}_{kj},\mathbb{X}^{(t-1)})$ in ISB-GC need be repeated for each new $k$ because it depends on the dynamically generated candidate $\textbf{X}^{(t-1)}_{ik}\textbf{X}^{(t-1)}_{kj}$. As a result, its time complexity regarding the consistency term is $O(N^2n^3)$ for updating each $\textbf{X}^{(t)}_{ij}$ and the overall cost is $O(N^4n^3)$ in each iteration of ISB-GC. To reduce this overhead, we are motivated to use two more efficient consistency metrics to delegate $C_{p}(\textbf{X}^{(t-1)}_{ik}\textbf{X}^{(t-1)}_{kj},\mathbb{X}^{(t-1)})$.

The first proxy metric is the unary consistency as described in Definition (\ref{def:single_consist}). Note this metric is also what we adopt in our preliminary work \cite{YanECCV14} where it is referred as `Algorithm 2'. Its evaluation function is:
\begin{small}
\begin{equation}\label{eq:score_graph_consistency_max}
k^*=\arg\max_{k=1}^N(1-\lambda)J(\textbf{X}^{(t-1)}_{ik}\textbf{X}^{(t-1)}_{kj})+\lambda C_u(k,\mathbb{X}^{(t-1)})
\end{equation}
\end{small}
The merit of this metric is all $\{C_u(k,\mathbb{X}^{(t-1)})\}_{k=1}^{N}$ can be pre-computed in each iteration only for once.

Alternatively, we devise another consistency estimator by first pre-computing all $\{C_{p}(\textbf{X}_{ij}^{(t-1)},\mathbb{X}^{(t-1)})\}_{i=1,j=i+1}^{N-1,N}$ in the beginning of each iteration $t-1$, and then adopting Eq.\ref{eq:score_pair_consistency_estimator_max} in the hope of approximating $C_p(\textbf{X}^{(t-1)}_{ik}\textbf{X}^{(t-1)}_{kj},\mathbb{X}^{(t-1)})$ as reused in updating $\textbf{X}^{(t)}_{ij}$:
\begin{small}
\begin{align}\label{eq:score_pair_consistency_estimator_max}
k^*=\arg\max_{k=1}^N&(1-\lambda)J(\textbf{X}^{(t-1)}_{ik}\textbf{X}^{(t-1)}_{kj})\\\notag
&+\lambda\sqrt{C_{p}(\textbf{X}^{(t-1)}_{ik},\mathbb{X}^{(t-1)})C_{p}(\textbf{X}^{(t-1)}_{kj},\mathbb{X}^{(t-1)})}
\end{align}
\end{small}
\begin{algorithm}[tb]
\small
  \caption{\small Unary/Pairwise Graduated Consistency-regularized Iterative Score Boosting (ISB-GC-U/P) \normalsize}
  \label{alg:ISB-EGC}
  \begin{algorithmic}[1]
    \REQUIRE $\{\textbf{K}_{ij}\}_{i=1,j=i+1}^{N-1,N}$, $T_{max}$, $\delta$, $\gamma$, $\lambda^{(T_0)}=\lambda^{(0)}$, $s$, mode;
      \STATE Perform pairwise matching to obtain initial $\mathbb{X}^{(0)}$;
      \STATE Calculate $J^{(0)}=\sum_{i=1,j=i+1}^{N-1,N}{\text{vec}(\textbf{X}_{ij}^{(0)})}^T\textbf{K}_{ij}\text{vec}(\textbf{X}^{(0)}_{ij})$;
    \STATE Run L\ref{alg:gcisl:pure_affinity_begin}-\ref{alg:gcisl:pure_affinity_end} in Alg.\ref{alg:ISB-GC} for pure affinity score driven boosting;
    \FOR {$t=T_{0}+1:T_{max}$}
        \IF {mode=`unary'}
            \STATE Compute unary consistency $\{C_u(k,\mathbb{X}^{(t-1)})\}_{k=1}^{N}$;
        \ELSIF{mode=`pairwise'}
            \STATE Update pairwise metric $\{C_{p}(\textbf{X}_{ij}^{(t-1)},\mathbb{X}^{(t-1)})\}_{i=1,j=i+1}^{N-1,N}$;
        \ENDIF
        \FORALL {$i=1,2,\dots,N-1;j=i+1,\ldots,N$}
        \IF {mode=`unary'}
            \STATE update $\textbf{X}^{(t)}_{ij}=\textbf{X}^{(t-1)}_{ik}\textbf{X}^{(t-1)}_{kj}$ by solving Eq.\ref{eq:score_graph_consistency_max};
        \ELSIF{mode=`pairwise'}
            \STATE update $\textbf{X}^{(t)}_{ij}=\textbf{X}^{(t-1)}_{ik}\textbf{X}^{(t-1)}_{kj}$ by solving Eq.\ref{eq:score_pair_consistency_estimator_max};
        \ENDIF
        \ENDFOR
        \STATE \textbf{if} $\sum_{i=1,j=i+1}^{N-1,N}\|\textbf{X}^{(t-1)}_{ij}-\textbf{X}^{(t)}_{ij}\|_2<\delta$, \textbf{break};
        \STATE $\lambda^{(t)}=min(1,\beta\lambda^{(t-1)}$);
    \ENDFOR
    \STATE Perform the same post-processing as in Alg.\ref{alg:ISB} (L\ref{alg:isl:enforce_consistency_begin}-\ref{alg:isl:enforce_consistency_end}).
  \end{algorithmic}
\end{algorithm}
We term the above two efficient variants as \textbf{Unary/Pairwise Graduated Consistency-regularized Iterative Score Boosting} as depicted in the chart of Alg.\ref{alg:ISB-EGC}. In this paper, they are further abbreviated by ISB-GC-U and ISB-GC-P, where the last characters denote for `unary' and `pairwise' respectively. In particular, ISB-GC-U has a convergence property as stated in below.
\begin{proposition}\label{prop:approximate_graduated_iteration}
ISB-GC-U will converge to a stationary $\mathbb{X}^*$.
\end{proposition}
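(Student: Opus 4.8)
The plan is to exploit the one feature that separates ISB-GC-U from ISB-GC and ISB$^{\text{cst}}$: in its evaluation function (Eq.\ref{eq:score_graph_consistency_max}) the regularizer $C_u(k,\mathbb{X}^{(t-1)})$ depends only on the anchor index $k$ and on the \emph{current} configuration, never on the pair $(i,j)$ being updated. Combined with the graduated schedule $\lambda^{(t)}=\min(1,\beta\lambda^{(t-1)})$ with $\beta>1$ --- which, since $\lambda^{(0)}\in(0,1]$, forces $\lambda^{(t)}=1$ after finitely many iterations and keeps it there --- this is enough to collapse the configuration into a fully consistent, hence stationary, state in a finite number of steps. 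I would not hunt for a potential that is monotone along the whole run (there is none during the mixed-weight phase, and $J(\textbf{X}_{ij})$ need not be non-decreasing here); the argument is purely finitary once $\lambda$ saturates at $1$.

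Concretely: (i) fix the first iteration $t_1$ with $\lambda^{(t_1-1)}=1$. There Eq.\ref{eq:score_graph_consistency_max} degenerates to $k^*=\arg\max_{k}C_u(k,\mathbb{X}^{(t_1-1)})$, whose right-hand side is independent of $(i,j)$; under a fixed, pair-independent tie-break the algorithm selects one common anchor $\hat k$, so, since the updates in Alg.\ref{alg:ISB-EGC} all read from $\mathbb{X}^{(t_1-1)}$ (Jacobi style), $\textbf{X}^{(t_1)}_{ij}=\textbf{X}^{(t_1-1)}_{i\hat k}\textbf{X}^{(t_1-1)}_{\hat k j}$ simultaneously for all $i<j$. (ii) Verify that this ``star'' configuration is fully consistent: using the standing conventions $\textbf{X}_{kk}=\textbf{I}_n$ and $\textbf{X}_{ij}=\textbf{X}_{ji}^T$, and the fact that after the size-equalizing preprocessing every $\textbf{X}_{ij}$ is a permutation matrix (hence $\textbf{X}_{\hat k l}\textbf{X}_{l\hat k}=\textbf{X}_{\hat k l}\textbf{X}_{\hat k l}^T=\textbf{I}_n$), one obtains $\textbf{X}^{(t_1)}_{il}\textbf{X}^{(t_1)}_{lj}=\textbf{X}^{(t_1-1)}_{i\hat k}(\textbf{X}^{(t_1-1)}_{\hat k l}\textbf{X}^{(t_1-1)}_{l\hat k})\textbf{X}^{(t_1-1)}_{\hat k j}=\textbf{X}^{(t_1)}_{ij}$ for every $i,j,l$, i.e. $C(\mathbb{X}^{(t_1)})=1$ by Definition (\ref{def:full_consist}). (iii) Show $\mathbb{X}^{(t_1)}$ is a fixed point: since it is fully consistent, Definition (\ref{def:single_consist}) gives $C_u(k,\mathbb{X}^{(t_1)})=1$ for every $k$, so whatever anchor the update selects at iteration $t_1+1$, composing through it returns each $\textbf{X}_{ij}$ unchanged (again by full consistency). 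Induction then gives $\mathbb{X}^{(t)}=\mathbb{X}^{(t_1)}$ for all $t\ge t_1$, the $\delta$-stopping test in Alg.\ref{alg:ISB-EGC} triggers, and the limit $\mathbb{X}^*=\mathbb{X}^{(t_1)}$ is stationary.

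The delicate part is not any single calculation but pinning down the two book-keeping hypotheses on which the clean collapse rests: that ties in $\arg\max_{k}C_u(k,\cdot)$ are broken by a rule that does not depend on $(i,j)$ (otherwise different pairs could select different tied anchors in step (i), the star --- hence the one-step jump to full consistency --- would fail, and one could even engineer an oscillation), and that all pairwise matchings are genuine permutation matrices under the transpose convention, which is exactly where the size-augmentation preprocessing is used and is what makes ``star-consistent'' equivalent to ``fully consistent''. It is worth noting in passing why the argument does not transfer to ISB-GC or ISB$^{\text{cst}}$: there the regularizer is the pairwise consistency $C_p(\textbf{X}_{ik}\textbf{X}_{kj},\mathbb{X})$ of the \emph{candidate composition}, which still depends on $(i,j)$ when $\lambda=1$, so no common anchor and no finite-step collapse to full consistency is available --- which is precisely why only the unary variant carries a convergence guarantee.
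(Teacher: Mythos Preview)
Your argument is correct and rests on the same mechanism as the paper's: once the consistency term dominates, every pair $(i,j)$ selects the \emph{same} anchor, the resulting star configuration is fully consistent, and a fully consistent configuration is a fixed point. The route you take to ``domination'' is, however, more elementary than the paper's. The paper bounds the maximal pairwise score gap $\Delta\widetilde{J}_{\mathcal{G}}$ and argues that as $\lambda^{(t)}\to 1$ one eventually has $C_u(a,\mathbb{X}^{(t)})-C_u(b,\mathbb{X}^{(t)})>\frac{1-\lambda}{\lambda}\Delta\widetilde{J}_{\mathcal{G}}$, so that the most consistent graph $\mathcal{G}_a$ wins the $\arg\max$ for all pairs even while $(1-\lambda)J(\cdot)$ is still present. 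You instead observe that the schedule $\lambda^{(t)}=\min(1,\beta\lambda^{(t-1)})$ with $\beta>1$ forces $\lambda=1$ \emph{exactly} after finitely many steps, at which point the affinity term vanishes and the $\arg\max$ over $C_u(k,\cdot)$ is trivially pair-independent. This buys you a cleaner finitary proof that sidesteps the score-gap bound entirely; the paper's version, on the other hand, would continue to apply to a schedule where $\lambda$ only approaches $1$ asymptotically without attaining it. Your explicit flagging of the two book-keeping hypotheses (pair-independent tie-breaking and the permutation/transpose conventions needed for $\textbf{X}_{\hat k l}\textbf{X}_{l\hat k}=\textbf{I}_n$) is apt; the paper handles the tie case only in a footnote and leaves the permutation hypothesis implicit.
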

\begin{proof}
Given two graphs $\mathcal{G}_i$, $\mathcal{G}_j$ of $n$ nodes for each, first, define the set of score difference $\{\Delta J_{ij}\}$ as $\Delta J_{ij}=|\text{vec}(\textbf{X})^T\textbf{K}_{ij}\text{vec}(\textbf{X})-\text{vec}(\textbf{Y})^T\textbf{K}_{ij}\text{vec}(\textbf{Y})|$, $\forall \textbf{X}, \textbf{Y}$, between any two assignment matrices $\textbf{X}$, $\textbf{Y} \in \mathbb{R}^{n\times n}$ in the enumerable permutation space. Suppose the largest value of difference is $\Delta \widetilde{J}_{ij}$ which is constant given $\textbf{K}_{ij}$. We further define the largest difference denoted by $\Delta \widetilde{J}_{\mathcal{G}}=\max\{\Delta \widetilde{J}_{ij}\}_{i=1,j=i+1}^{N-1,N}$ for all pairs in the graph.

For a certain iteration in ISB-GC-U, suppose the most consistent graph by Definition (\ref{def:single_consist}) is $\mathcal{G}_a$, and the second one is $\mathcal{G}_b$. In iteration $t$ it will finally satisfy the condition: $\Delta C^{(t)}_u=C_u(a,\mathbb{X}^{(t)})-C_u(b,\mathbb{X}^{(t)})>\frac{(1-\lambda)}{\lambda}\Delta \widetilde{J}_{\mathcal{G}}$ as $\lambda^{(t)}\rightarrow1$ by $\lambda^{(t)}=min(\rho\lambda^{(t-1)},1)$\footnote{In case $C_u(a,\mathbb{X}^{(t)})=C_u(b,\mathbb{X}^{(t)})$, without loss of generality, one can choose any of them as the largest one, and choose the next $\mathcal{G}_c$ as the second largest if $C_u(a,\mathbb{X}^{(t)})>C_u(c,\mathbb{X}^{(t)}).$}. Then, the following inequality will hold for any $k\neq a$ and $\{\mathcal{G}_i\}_{i=1}^{N-1}, \{\mathcal{G}_j\}_{j=i+1}^{N}$:
\begin{small}
\begin{align}\label{eq:graph_wise_bound}
&(1-\lambda)J(\textbf{X}^{(t)}_{ia}\textbf{X}^{(t)}_{aj})+\lambda C_u(a,\mathbb{X}^{(t-1)})\\\notag
=&(1-\lambda)J(\textbf{X}^{(t)}_{ia}\textbf{X}^{(t)}_{aj})+\lambda C_u(b,\mathbb{X}^{(t-1)})+\lambda\Delta C_u^{(t)}\\\notag
=&(1-\lambda)\left(J(\textbf{X}^{(t)}_{ia}\textbf{X}^{(t)}_{aj})-J(\textbf{X}^{(t)}_{ik}\textbf{X}^{(t)}_{kj})\right)+(1-\lambda)J(\textbf{X}^{(t)}_{ik}\textbf{X}^{(t)}_{kj})\\\notag
&+\lambda C_u(b,\mathbb{X}^{(t-1)})+\lambda\Delta C_u^{(t)}\\\notag
\geq&\lambda\Delta C^{(t)}_u-\Delta \widetilde{J}_{\mathcal{G}}+(1-\lambda)J(\textbf{X}^{(t)}_{ik}\textbf{X}^{(t)}_{kj})+\lambda C_u(k,\mathbb{X}^{(t-1)})\\\notag
>&(1-\lambda)J(\textbf{X}^{(t)}_{ik}\textbf{X}^{(t)}_{kj})+\lambda C_u(k,\mathbb{X}^{(t-1)})
\end{align}
\end{small}
As a result, all $\{\textbf{X}^{(t+1)}_{ij}\}_{i=1,j=i+1}^{N-1,N}$ will be updated by $\textbf{X}^{(t+1)}_{ij}=\textbf{X}^{(t)}_{ia}\textbf{X}^{(t)}_{aj}$. In fact, in iteration $t$, $\mathbb{X}^{(t)}$ become fully consistent by Definition (\ref{def:full_consist}) thus cannot be lifted by either affinity or consistency metrics used in this paper.
\end{proof}
The iterative procedure in ISB-GC-U may exceed the maximum iterating threshold $T_{max}$ before reaching the above inequality holds when $\lambda$ is assumed close to $1$ enough. Thus a post-processing step is also needed in line with Alg.\ref{alg:ISB} (ISB) and Alg.\ref{alg:ISB-GC} (ISB-GC) if necessary.

For the `pairwise' version ISB-GC-P, this method in our experiments also often converges to a stationary $\mathbb{X}^{*}$ similar to ISB-GC, yet there is no theoretical guarantee. Fig.\ref{fig:iter_study} illustrates how the overall matching accuracy is lifted via the proposed algorithms, controlled by Table.\ref{tab:iter_study_setting}.
\subsection{Eliciting affinity and consistency for inliers}\label{subsec:outlier_adapt}
One limitation of many previous studies for both pairwise and multi-graph matching as mentioned in this paper, is that they enforce all nodes in one graph to have matchings from the other(s), either by explicitly using a permutation matrix imposing strict node-to-node correspondences \cite{UmeyamaPAMI88,ZhouCVPR12}, or implicitly doing so by generating node-to-node matchings as many as possible such that the objective affinity score is optimized \cite{ChoICCV13,ChertokPAMI10,DuchenneCVPR09,CaetanoPAMI09,GoldPAMI96}. A common simplification when applying a pairwise GM method is assuming one of two graphs is a reference graph, such that each of its node can find a match from the other testing graph.

We are motivated to devise a general outlier-rejection mechanism not depending on the particular characters of the testing data in terms of the node distribution, weight attributes, noises type, and graph structure \emph{etc}.

The Graduated Assignment method \cite{GoldPAMI96} attempts to handle outlier nodes by assigning them to additional slack variables. While in the presence of a majority of outliers, this approach is not widely adopted due to its sensitivity to parameters for the slack variables. The very recent work \cite{ChoCVPR14} proposes a general approach via a max-pooling mechanism suited for the scenario of matching two graphs with a majority of outliers. A less relevant work is by Torresani et al. \cite{TorresaniECCV08} which also allows nodes to be assigned in an unmatchable status in an energy function induced on the Markov Random Field. However, its complex objective function is designed that account for various similarity measurements \emph{e.g.} appearance descriptors, occlusions, spatial proximity \emph{etc.} beyond the general setting of the affinity matrix considered in this paper. Moreover, the associated parameters need to be learned with labeled correspondence ground truth from a training dataset which impedes its applicability. Other pairwise GM methods \cite{CollinsECCV14,ChoCVPR12} integrate the point detection and matching in a synergic manner by an integrated system, while this paper assumes the graphs are given and specifically confined in visual problems.

As a building-block, all these pairwise GM methods can be used by our approaches and other state-of-the-arts \cite{ChenICML14,YanICCV13,YanTIP15,PachauriNIPS13} to improve the initial $\mathbb{X}^{(0)}$. Nevertheless, the context of multiple graphs allows for a more robust mechanism for pruning outliers such that only the affinity and consistency relevant to common inliers are considered in our boosting framework.

We describe our `inlier nodes' eliciting method using the node-wise consistency $C_{n}(u^k,\mathbb{X})$ by Definition (\ref{def:node_consist}). Another alternative is using the node-wise affinity $S_{n}(u^k,\mathbb{X},\mathbb{K})$ via Definition (\ref{def:node_affinity}), which can be applied in a similar fashion thus its description is omitted here.

First, given the matching configuration $\mathbb{X}$, each node in one graph is scored by the node-wise consistency by Definition (\ref{def:node_consist}). Assume the number of common inliers \emph{i.e.} $n_i$ for all graphs is known, which is available when a reference template is given, or estimated by other means \emph{e.g.} \cite{ChenICML14}. How to estimate $n_i$ is not the focus of this paper.

Then we make two revisions for all of our methods: i) the pairwise affinity term $\text{vec}(\textbf{X})^T\textbf{K}\text{vec}(\textbf{X})$ is modified by keeping the rows of $\textbf{X}$ that correspond to the first $n_i$ (the number of inliers) nodes in descending order by their node-wise consistency score $C_{n}(u^k,\mathbb{X})$ in Definition (\ref{def:node_consist}), and zeroing the rest of rows. This is because the affinity or consistency between outliers is irrelevant to the semantic matching accuracy thus shall be excluded in the boosting procedure. For node-wise consistency, we use $\psi_c(\textbf{X},\mathbb{X},n_i)$ (or $\psi_a(\textbf{X},\mathbb{X},\mathbb{K},n_i)$ for node-wise affinity) to denote this `mask' operation on $\textbf{X}$ as it is determined by both the input configuration $\mathbb{X}$ and $n_i$ (also $\mathbb{K}$ in case of the affinity metric). Then the affinity score is rewritten as $J^{\psi_c}=\text{vec}(\psi_c(\textbf{X},\mathbb{X},n_i))^T\textbf{K}\text{vec}(\psi_c(\textbf{X},\mathbb{X},n_i))$; ii) the consistency term is modified by the following two equations for unary and pairwise consistency:
\begin{footnotesize}
\begin{align}\label{eq:elicit_graph_wise_consistency}
&C^{\psi_c}_u(k,\mathbb{X}^{(t-1)},n_i)\\\notag
=&1-\frac{\sum_{i=1}^{N-1}\sum_{j=i+1}^{N}\|\psi_c(\textbf{X}^{(t-1)}_{ij}-\textbf{X}^{(t-1)}_{ik}\textbf{X}^{(t-1)}_{kj},\mathbb{X}^{(t-1)},n_i)\|_F}{n_iN(N-1)}\\ \label{eq:elicit_pairwise_consistency}&C^{\psi_c}_{p}(\textbf{X}^{(t-1)}_{ij},\mathbb{X}^{(t-1)},n_i)\\\notag
=&1-\frac{\sum_{k=1}^{N}\|\psi_c(\textbf{X}^{(t-1)}_{ij}-\textbf{X}^{(t-1)}_{ik}\textbf{X}^{(t-1)}_{kj},\mathbb{X}^{(t-1)},n_i)\|_F}{2n_iN}
\end{align}
\end{footnotesize}
We use the above tailored variants for affinity score and consistency to replace the original $J$, $C_u$ and $C_{p}$ in the evaluation functions: Eq.\ref{eq:score_max_compact}, Eq.\ref{eq:score_consistency_max}, Eq.\ref{eq:score_graph_consistency_max}, Eq.\ref{eq:score_pair_consistency_estimator_max}. Similar steps are performed for using the node-wise affinity.
\section{Experiments and Discussion}
The experiments are performed on both synthetic and real-image data. The synthetic test is controlled by varying the noise level of deformation, outlier, edge density and initial pairwise matching coverage. The real images are tested by varying viewing angles, scales, shapes \emph{etc.}. The matching accuracy over all graphs, is calculated by averaging all pairwise matching accuracy \begin{small}$\frac{\sum_{i=1}^{N-1}\sum_{j=i+1}^N \text{Acc}_{ij}}{N(N-1)/2}$\end{small}. Each $\text{Acc}_{ij}$ computes the matches between $\textbf{X}_{ij}^{\text{alg}}$ given by the ground truth $\textbf{X}_{ij}^{\text{tru}}$: $\text{Acc}_{ij}=\frac{\text{tr}(\textbf{X}_{ij}^{\text{alg}}\textbf{X}_{ij}^{\text{tru}})}{\text{tr}(\textbf{1}_{n_j\times n_i} \textbf{X}_{ij}^{\text{tru}})}$. In line with \cite{YanTIP15,ZhouCVPR12}, we only calculate the accuracy for common inliers and ignore the correspondences between outliers. The above protocol is widely adopted by related work such as \cite{ChoECCV10,ZhouCVPR12}.

If not otherwise specified, the parameters of our methods are universally set by: $T_0=2$, $T_{max}=6$, $\mu^{(0)}=0.2$, $\gamma=0.3$, $\beta=1.1$. Note when $t \leq T_0$, only score boosting is performed without infusing consistency regularization. As a simple speed-up trick used in our methods, for the generated candidate solutions in the evaluation formulas $\textbf{X}_{ij}$ by Eq.\ref{eq:score_max_compact} and Eq.\ref{eq:score_consistency_max}, we compute their scores after excluding those duplicate ones.
\begin{table}[t]
\center
\small
\caption{Description of the parameters for experiment settings in this paper.}
\resizebox{0.47\textwidth}{!}{
\begin{tabular}{|c|c|c|c|c|c|c|c|c|}
  \hline
  graph\#&inlier\#&outlier\#&estimated $n_{i}$&deform&density&sensitivity&coverage&iter\#\\\hline
  $N$&$n_{i}$&$n_{o}$&$n_{est}$&$\varepsilon$&$\rho$&$\sigma$&$c$&$T_{max}$\\\hline
\end{tabular}\label{tab:parameter_meaning}}
\normalsize
\end{table}
\subsection{Dataset description and affinity setting}
\textbf{Synthetic random graph matching} The random graph test follows the widely used protocol of \cite{YanICCV13,GoldPAMI96,ChoECCV10,ZhouCVPR12,CourNIPS06}. For each trial, a reference graph with $n_{i}$ nodes is created by assigning a random weight to its edge, uniformly sampled from the interval $[0,1]$. Then the `perturbed' graphs are created by adding a Gaussian deformation disturbance to the edge weight $q_{ij}^r$, which is sampled from $N(0,\varepsilon)$ i.e. $q_{ij}^p$ = $q_{ij}^r$+$N(0,\varepsilon)$ where the superscript `p' and `r' denotes for `perturb' and `reference' respectively. Each `perturbed' graph is further added by $n_{o}$ outliers, which can also be helpful to make the graphs of equal sizes when the input graphs are different sizes. Its edge density is controlled by the density parameter $\rho\in[0,1]$ via random sampling. The edge affinity is computed by $K_{ia,jb}=\exp(-\frac{(q_{ij}-q_{ab})^2}{\sigma^2})$ where $\sigma^2$ is the edge similarity sensitivity parameter. No single-node feature is used and the unary affinity $K_{ia,ia}$ is set to zero, leaving the matching score entirely to the pairwise geometric information. In addition, we control the `coverage' of the initial matching configuration $\mathbb{X}$ by parameter $c\in [0,1]$, which denotes the rate of matching pairs that are generated by the pairwise matching solver, and the rest $1-c$ portion of pairs are assigned with a randomly generated matching solution. A description of these parameters is listed in Table \ref{tab:parameter_meaning}.

\textbf{Synthetic random point set matching} The random point set matching is also explored as tested in \cite{ChoCVPR14,LeordeanuICCV05}. First, $n_{i}$ inliers $\{\textbf{p}_k\}_{k=1}^{n_i}$ are randomly generated on the 2-D plane via Gaussian distribution $N(0,1)$ as the reference point set. Then each point is copied with Gaussian noise $N(0,\varepsilon)$ to generate $N$ random point set by further adding $n_{o}$ outliers via $N(0,1)$. The edge weight is set by the Euclidean distances $q_{ij}=\|\textbf{p}_{i}-\textbf{p}_{j}\|$ for each graph. The subsequent steps are the same with the random graph matching case. This dataset is used for the synthetic testing in comparison with MPM \cite{ChoCVPR14} under a relatively large number of outliers. This is because the max-pooling method MPM is designed for geometric relation rather than arbitrary edge weights as in the random graph matching setting.

\textbf{CMU-POSE Sequence} This data contains four sequences. Two sequences are from the CMU house (30 landmarks, 101 frames), hotel (30 landmarks, 111 frames) sequence (http://vasc.ri.cmu.edu//idb/html/motion/) which are commonly used in \cite{YanICCV13,ChoECCV10,CaetanoPAMI09,ZhouCVPR12,ChoICCV13}. The other two sequences are sampled from the `VolvoC70' and the `houseblack' (both 19 landmarks, 225 frames) covering a range of 70 degrees of viewing angles from the POSE dataset \cite{VikstenICRA09}. We use this data for the `partial similarity' test. We select $n_{i}$=$10$ landmarks out of all $n_{ant}$ annotated landmarks, and randomly choose $n_{o}$=$4$ nodes from the rest $n_{ant}$$-$$n_{i}$ nodes as `outliers'. The edge sampling follows the same way as \cite{ZhouCVPR12} by constructing the sparse delaunay triangulation among the points (no distinction to inliers or outliers). The affinity matrix is set by $K_{ia,jb}=\exp(\frac{(d_{ij}-d_{ab})^2}{-\sigma^2})$ where $d_{ij}$, $d_{ab}$ are the Euclidean distances between two points normalized to $[0,1]$ by dividing the largest edge length. The diagonal is set to zero same as \cite{ChoECCV10,ZhouCVPR12}.

\textbf{WILLOW-ObjectClass} The object class dataset released in \cite{ChoICCV13} is constructed by images from Caltech-256 and PASCAL VOC2007. Each object category contains different number of images: 109 Face, 50 Duck, 66 Wine bottle, 40 Motorbike, and 40 Car images. For each image, 10 feature points are manually labeled on the target object. We further add $n_o$ random outliers detected by a SIFT detector in our outlier test. The edge sampling for the affinity matrix follows the same way as \cite{ZhouCVPR12} by constructing the sparse delaunay triangulation among the landmarks as done in the CMU-POSE sequence test. For defining the edge affinity, we follow the protocol of \cite{ZhouCVPR12,ChoICCV13} that sets the final affinity matrix re-weighted by the edge length affinity and angle affinity: $K_{ia,jb}=\beta K_{ia,jb}^{\text{len}}+(1-\beta) K_{ia,jb}^{\text{ang}}$ where $\beta \in [0,1]$ is the weighting parameter. This is because this dataset contains more ambiguities in terms of structural symmetry as pointed out in \cite{YanTIP15}, than the sequence dataset if only length information is used. The angle for each edge is computed by the absolute angle between the edge and the horizontal line as used in \cite{ZhouCVPR12}. The edge affinity and angle affinity are calculated in the same way in the CMU-POSE test. For node-wise affinity is also set to zero to avoid the point-wise feature dominate the overall matching results.

For modeling affinity, this paper considers the second-order information, yet higher-order edge affinities in the hyper-matching setting \cite{LeeCVPR11,LeordeanuICCV11} are also applicable in our methods since their affinity score can be computed.
\begin{figure*}[tb]
\centering
\setlength{\abovecaptionskip}{0pt}
\setlength{\belowcaptionskip}{-10pt}
\subfigure{\label{fig:legend}
\includegraphics[width=0.98\textwidth]{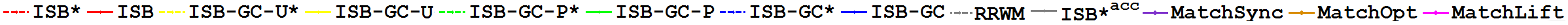}}\\\vspace{-10pt}
\setcounter{subfigure}{0}
\subfigure[Deform by $\varepsilon$]{\label{fig:random_formal_vary_deform_rrwm_acc}
\includegraphics[width=0.19\textwidth]{{{performance_random_deform_RRWM_RRWM_respect_0.08_0.18_formal_acc}}}}\hspace{-7pt}
\subfigure[Outlier by $n_o$]{\label{fig:random_formal_vary_outlier_rrwm_acc}
\includegraphics[width=0.19\textwidth]{{{performance_random_outlier_RRWM_RRWM_respect_1_6_formal_acc}}}}\hspace{-7pt}
\subfigure[Density by $\rho$]{\label{fig:random_formal_vary_density_rrwm_acc}
\includegraphics[width=0.19\textwidth]{{{performance_random_density_RRWM_RRWM_respect_0.45_0.60_formal_acc}}}}\hspace{-7pt}
\subfigure[Coverage by $c$]{\label{fig:random_formal_vary_coverage_rrwm_acc}
\includegraphics[width=0.19\textwidth]{{{performance_random_complete_RRWM_RRWM_respect_0.05_0.30_formal_acc}}}}\hspace{-7pt}
\subfigure[Outlier by $n_o$ (time)]{\label{fig:random_formal_vary_outlier_rrwm_tim}
\includegraphics[width=0.19\textwidth]{{{performance_random_outlier_RRWM_RRWM_respect_1_6_formal_tim}}}}\\\vspace{-5pt}
\subfigure[Deform by $N$]{\label{fig:random_formal_vary_graph_cnt_deform_rrwm_acc}
\includegraphics[width=0.19\textwidth]{{{performance_random_deform_RRWM_RRWM_respect_formal_all_acc}}}}\hspace{-7pt}
\subfigure[Outlier by $N$]{\label{fig:random_formal_vary_graph_cnt_outlier_rrwm_acc}
\includegraphics[width=0.19\textwidth]{{{performance_random_outlier_RRWM_RRWM_respect_formal_all_acc}}}}\hspace{-7pt}
\subfigure[Density by $N$]{\label{fig:random_formal_vary_graph_cnt_density_rrwm_acc}
\includegraphics[width=0.19\textwidth]{{{performance_random_density_RRWM_RRWM_respect_formal_all_acc}}}}\hspace{-7pt}
\subfigure[Coverage by $N$]{\label{fig:random_formal_vary_graph_cnt_coverage_rrwm_acc}
\includegraphics[width=0.19\textwidth]{{{performance_random_complete_RRWM_RRWM_respect_formal_all_acc}}}}\hspace{-7pt}
\subfigure[Deform by $N$ (time)]{\label{fig:random_formal_vary_graph_cnt_deform_rrwm_tim}
\includegraphics[width=0.19\textwidth]{{{performance_random_deform_RRWM_RRWM_respect_formal_all_tim}}}}\\
\caption{Evaluation for ISB, ISB-GC and the two derivatives ISB-GC-U and ISB-GC-P, and state-of-the-arts on the synthetic random graph dataset by varying the disturbance level (top row), and by varying the number of considered graphs (bottom row). Refer to Table.\ref{tab:case_study_setting} for the settings.}
\label{fig:formal_syn_test}
\end{figure*}
\begin{figure*}[tb]
\centering
\setlength{\abovecaptionskip}{0pt}
\setlength{\belowcaptionskip}{-10pt}
\subfigure{\label{fig:legend}
\includegraphics[width=0.98\textwidth]{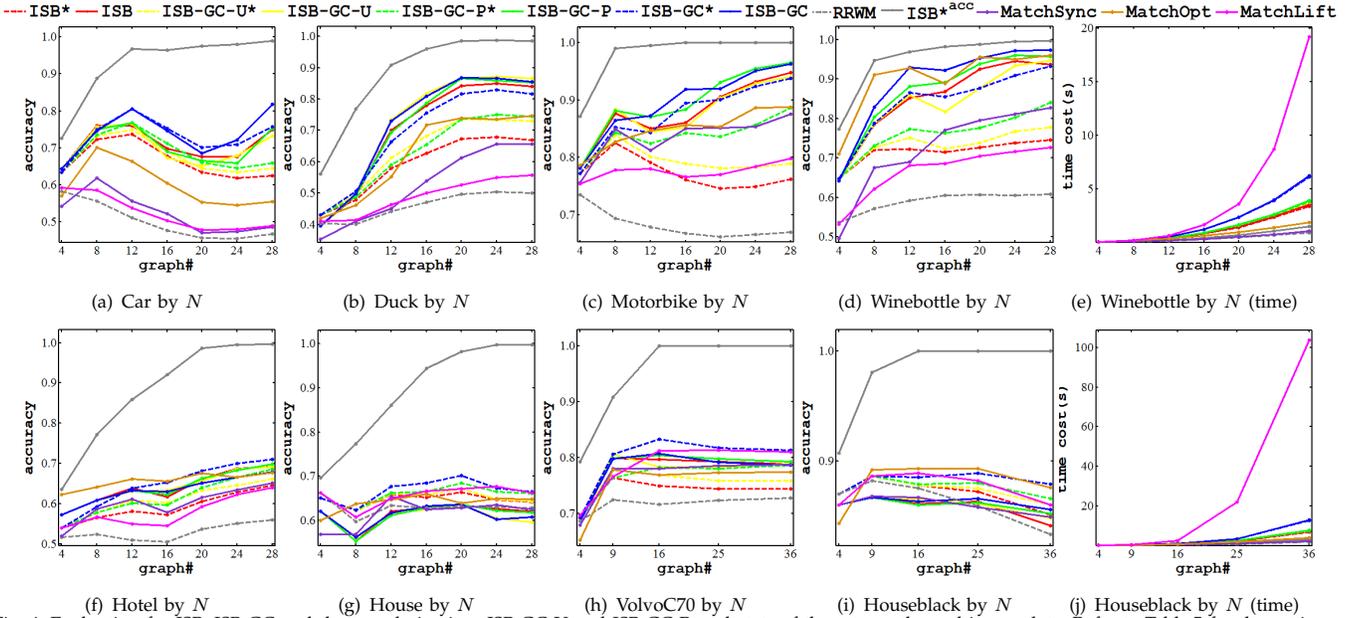}}\\\vspace{-10pt}
\setcounter{subfigure}{0}
\subfigure[Car by $N$]{\label{fig:object_formal_car_rrwm_vary_cnt_acc}
\includegraphics[width=0.19\textwidth]{{{performance_object_car_RRWM_RRWM_respect_formal_all_acc}}}}\hspace{-7pt}
\subfigure[Duck by $N$]{\label{fig:object_formal_duck_rrwm_vary_cnt_acc}
\includegraphics[width=0.19\textwidth]{{{performance_object_duck_RRWM_RRWM_respect_formal_all_acc}}}}\hspace{-7pt}
\subfigure[Motorbike by $N$]{\label{fig:object_formal_motorbike_rrwm_vary_cnt_acc}
\includegraphics[width=0.19\textwidth]{{{performance_object_motorbike_RRWM_RRWM_respect_formal_all_acc}}}}\hspace{-7pt}
\subfigure[Winebottle by $N$]{\label{fig:object_formal_winebottle_rrwm_vary_cnt_acc}
\includegraphics[width=0.19\textwidth]{{{performance_object_winebottle_RRWM_RRWM_respect_formal_all_acc}}}}\hspace{-7pt}
\subfigure[Winebottle by $N$ (time)]{\label{fig:object_formal_winebottle_rrwm_vary_cnt_tim}
\includegraphics[width=0.19\textwidth]{{{performance_object_winebottle_RRWM_RRWM_respect_formal_all_tim}}}}\\\vspace{-5pt}
\subfigure[Hotel by $N$]{\label{fig:cmu_formal_hotel_rrwm_vary_cnt_acc}
\includegraphics[width=0.19\textwidth]{{{performance_cmu_hotel_RRWM_RRWM_respect_formal_all_acc}}}}\hspace{-7pt}
\subfigure[House by $N$]{\label{fig:cmu_formal_house_rrwm_vary_cnt_acc}
\includegraphics[width=0.19\textwidth]{{{performance_cmu_house_RRWM_RRWM_respect_formal_all_acc}}}}\hspace{-7pt}
\subfigure[VolvoC70 by $N$]{\label{fig:pose_formal_volvo_rrwm_vary_cnt_acc}
\includegraphics[width=0.19\textwidth]{{{performance_pose_volvoc70_RRWM_RRWM_respect_formal_all_acc}}}}\hspace{-7pt}
\subfigure[Houseblack by $N$]{\label{fig:pose_formal_houseblack_rrwm_vary_cnt_acc}
\includegraphics[width=0.19\textwidth]{{{performance_pose_houseblack_RRWM_RRWM_respect_formal_all_acc}}}}\hspace{-7pt}
\subfigure[Houseblack by $N$ (time)]{\label{fig:pose_formal_houseblack_rrwm_vary_cnt_tim}
\includegraphics[width=0.19\textwidth]{{{performance_pose_houseblack_RRWM_RRWM_respect_formal_all_tim}}}}\\\vspace{-5pt}
\caption{Evaluation for ISB, ISB-GC and the two derivatives ISB-GC-U and ISB-GC-P, and state-of-the-arts on the real image data. Refer to Table.\ref{tab:formal_study_real_setting} for the settings.}
\label{fig:formal_real_vary_cnt_test}
\end{figure*}
\begin{figure*}[t]
\centering
\setlength{\abovecaptionskip}{0pt}
\setlength{\belowcaptionskip}{-10pt}
\subfigure{\label{fig:legend}
\includegraphics[width=0.98\textwidth]{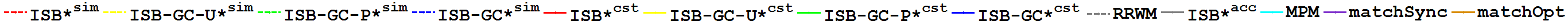}}\\\vspace{-10pt}
\setcounter{subfigure}{0}
\subfigure[Outlier by $n_{o}$]{\label{fig:random_outlier_point_rrwm_vary_nout_acc}
\includegraphics[width=0.19\textwidth]{{{performance_random_outlier_RRWM_RRWM_respect_6_16_massOutlier_acc}}}}\hspace{-7pt}
\subfigure[Outlier by $N$]{\label{fig:random_outlier_point_rrwm_vary_ngrh_acc}
\includegraphics[width=0.19\textwidth]{{{performance_random_outlier_RRWM_RRWM_respect_massOutlier_all_acc}}}}\hspace{-7pt}
\subfigure[Deform by $\varepsilon$]{\label{fig:random_outlier_point_rrwm_vary_deform_acc}
\includegraphics[width=0.19\textwidth]{{{performance_random_deform_RRWM_RRWM_respect_0.00_0.10_massOutlier_acc}}}}\hspace{-7pt}
\subfigure[Outlier by $n_{est}$]{\label{fig:random_outlier_point_rrwm_vary_nest_acc}
\includegraphics[width=0.19\textwidth]{{{performance_random_outlier_RRWM_RRWM_respect_1_18_massOutlier_spec_acc}}}}\hspace{-7pt}
\subfigure[Outlier by $n_{o}$ (time)]{\label{fig:random_outlier_point_rrwm_vary_nout_tim}
\includegraphics[width=0.19\textwidth]{{{performance_random_outlier_RRWM_RRWM_respect_6_16_massOutlier_tim}}}}\\\vspace{-5pt}
\caption{`Outlier' test for our methods driven by the consistency (solid) and affinity (dashed) inlier eliciting mechanism on random points, controlled by Table.\ref{tab:outlier_syn_setting}.}
\label{fig:outlier_random_test}
\end{figure*}
\begin{figure*}[tb]
\centering
\setlength{\abovecaptionskip}{0pt}
\setlength{\belowcaptionskip}{-10pt}
\subfigure{\label{fig:legend}
\includegraphics[width=0.98\textwidth]{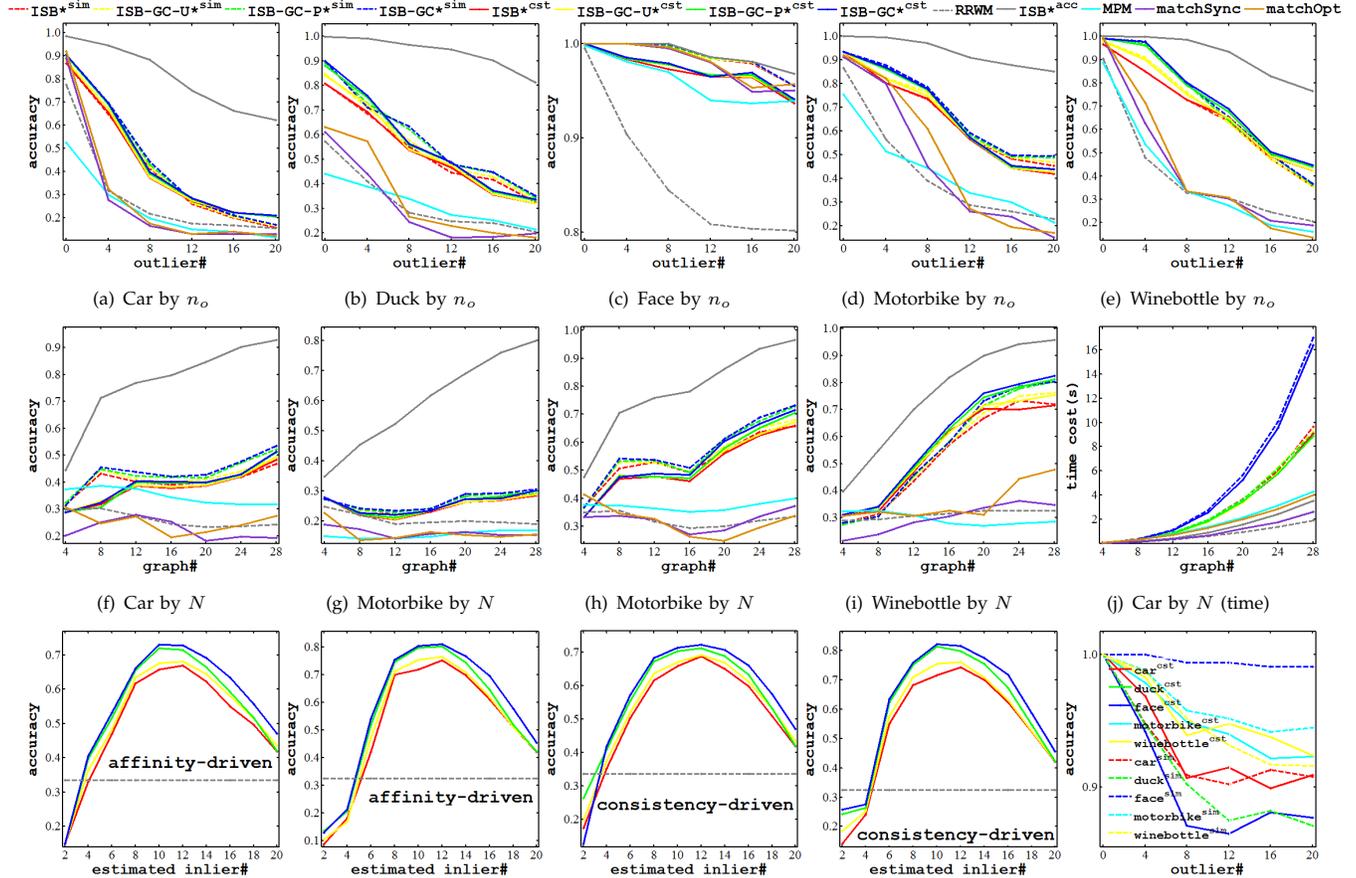}}\\\vspace{-10pt}
\setcounter{subfigure}{0}
\subfigure[Car by $n_{o}$]{\label{fig:object_outlier_car_rrwm_vary_nout_acc}
\includegraphics[width=0.19\textwidth]{{{performance_object_duck_RRWM_RRWM_respect_0_20_massOutlier_acc}}}}\hspace{-7pt}
\subfigure[Duck by $n_{o}$]{\label{fig:object_outlier_duck_rrwm_vary_nout_acc}
\includegraphics[width=0.19\textwidth]{{{performance_object_car_RRWM_RRWM_respect_0_20_massOutlier_acc}}}}\hspace{-7pt}
\subfigure[Face by $n_{o}$]{\label{fig:object_outlier_face_rrwm_vary_nout_acc}
\includegraphics[width=0.19\textwidth]{{{performance_object_face_RRWM_RRWM_respect_0_20_massOutlier_acc}}}}\hspace{-7pt}
\subfigure[Motorbike by $n_{o}$]{\label{fig:object_outlier_motorbike_rrwm_vary_nout_acc}
\includegraphics[width=0.19\textwidth]{{{performance_object_motorbike_RRWM_RRWM_respect_0_20_massOutlier_acc}}}}\hspace{-7pt}
\subfigure[Winebottle by $n_{o}$]{\label{fig:object_outlier_winebottle_rrwm_vary_nout_acc}
\includegraphics[width=0.19\textwidth]{{{performance_object_winebottle_RRWM_RRWM_respect_0_20_massOutlier_acc}}}}\\\vspace{-5pt}
\subfigure[Car by $N$]{\label{fig:object_outlier_car_rrwm_vary_ngrh_acc}
\includegraphics[width=0.19\textwidth]{{{performance_object_car_RRWM_RRWM_respect_massOutlier_all_acc}}}}\hspace{-7pt}
\subfigure[Motorbike by $N$]{\label{fig:object_outlier_duck_rrwm_vary_ngrh_acc}
\includegraphics[width=0.19\textwidth]{{{performance_object_duck_RRWM_RRWM_respect_massOutlier_all_acc}}}}\hspace{-7pt}
\subfigure[Motorbike by $N$]{\label{fig:object_outlier_motorbike_rrwm_vary_ngrh_acc}
\includegraphics[width=0.19\textwidth]{{{performance_object_motorbike_RRWM_RRWM_respect_massOutlier_all_acc}}}}\hspace{-7pt}
\subfigure[Winebottle by $N$]{\label{fig:object_outlier_winebottle_rrwm_vary_ngrh_acc}
\includegraphics[width=0.19\textwidth]{{{performance_object_winebottle_RRWM_RRWM_respect_massOutlier_all_acc}}}}\hspace{-7pt}
\subfigure[Car by $N$ (time)]{\label{fig:object_outlier_car_rrwm_vary_ngrh_tim}
\includegraphics[width=0.19\textwidth]{{{performance_object_car_RRWM_RRWM_respect_massOutlier_all_tim}}}}\vspace{-5pt}
\subfigure[Motorbike by $n_{est}$]{\label{fig:object_outlier_motorbike_rrwm_vary_estin_acc_afy}
\includegraphics[width=0.19\textwidth]{{{performance_object_motorbike_RRWM_RRWM_respect_2_20_massOutlier_spec_acc_afy}}}}\hspace{-7pt}
\subfigure[Winebottle by $n_{est}$]{\label{fig:object_outlier_winebottle_rrwm_vary_estin_acc_afy}
\includegraphics[width=0.19\textwidth]{{{performance_object_winebottle_RRWM_RRWM_respect_2_20_massOutlier_spec_acc_afy}}}}\hspace{-7pt}
\subfigure[Motorbike by $n_{est}$]{\label{fig:object_outlier_motorbike_rrwm_vary_estin_acc}
\includegraphics[width=0.19\textwidth]{{{performance_object_motorbike_RRWM_RRWM_respect_2_20_massOutlier_spec_acc}}}}\hspace{-7pt}
\subfigure[Winebottle by $n_{est}$]{\label{fig:object_outlier_winebottle_rrwm_vary_estin_acc}
\includegraphics[width=0.19\textwidth]{{{performance_object_winebottle_RRWM_RRWM_respect_2_20_massOutlier_spec_acc}}}}\hspace{-7pt}
\subfigure[Inlier hitting rate by $n_{o}$]{\label{fig:object_outlier_rrwm_nout_hit_rate}
\includegraphics[width=0.19\textwidth]{{{outlier_est_acc_noutlier=0-20_car_duck_face_bike_bottle_cst+afy_acc}}}}\\\vspace{-5pt}
\caption{`Outlier' test for our methods driven by the consistency (solid) and affinity (dashed) inlier eliciting mechanism on images. Refer to Table.\ref{tab:outlier_real_setting} for the settings.}
\label{fig:outlier_object_test}
\end{figure*}
\subsection{Comparing methods and time complexity}
\begin{table}[tb]
\center
\small
\caption{Time complexity comparison of the state-of-the-arts.}
\resizebox{0.47\textwidth}{!}{
\begin{tabular}{|rr|}
  \hline
 algorithm&time complexity\\\hline
 ISB (Alg.\ref{alg:ISB}), ISB-GC-U, ISB-GC-P (Alg.\ref{alg:ISB-EGC})&$O(N^3n^3+N^2\tau_{\text{pair}})$\\
 ISB-GC (Alg.\ref{alg:ISB-GC})&$O(N^4n+N^3n^3+N^2\tau_{\text{pair}})$\\
 MatchLift \cite{ChenICML14}&$O(N^3n^3+N^2\tau_{\text{pair}})$\\
 MatchOpt (non-factorized) \cite{YanICCV13,YanTIP15}&$O(N^2n^4+N^3n+N^2\tau_{\text{pair}})$\\
 MatchSync \cite{PachauriNIPS13}&$O(N^2n^3+N^2\tau_{\text{pair}})$\\
 \hline
\end{tabular}}
\label{tab:time_complexity}
\vspace{-20pt}
\end{table}

The implementations of all comparing methods are the authors' Matlab code and all tests run on a laptop (2.9G Intel Core I7 and 8G memory) with a single thread.

\textbf{Re-weighted Random Walk Matching (RRWM)} Since many existing multi-graph matching methods \cite{YanICCV13,YanTIP15,PachauriNIPS13,ChenICML14,RibaltaCVIU11,RibaltaCIARP09} require a pairwise GM solver in different ways, we choose RRWM \cite{ChoECCV10} due to its cost-effectiveness. We set its parameters $\alpha$=0.2, $\beta$=30.

\textbf{Max-Pooling Matching (MPM)} MPM \cite{ChoCVPR14} is an outlier-tolerant method computing the affinity score of each candidate match via maximal support from nearby matches. This method is tested in the presence of more outliers.

Several state-of-the-art multi-graph matching methods are evaluated together with our methods\footnote{Due to space limitation, we have to omit a few peer multi-graph/point-set matching methods, including graduated assignment based common labeling \cite{RibaltaSSPR10,RibaltaIJPRAI13}, the joint feature point matching \cite{JiaROML14} and the recent work \cite{HuangSGP13}. The method in \cite{RibaltaSSPR10,RibaltaIJPRAI13} has been evaluated in \cite{YanTIP15}, and \cite{ChenICML14} is a more advanced method originated from \cite{HuangSGP13}. The feature matching method \cite{JiaROML14} only considers point-wise features descriptors rather than the 2nd/higher-order information as formulated in the GM problem. Thus in fact this becomes a first-order point matching problem rather than graph matching.}. It shall be noted that all methods start with an initial matching configuration $\mathbb{X}^{(0)}$ obtained from RRWM in this paper.

\textbf{Alternating optimization for multi-graph matching (MatchOpt)} MatchOpt \cite{YanICCV13,YanTIP15} transforms the multi-graph matching problem into a pairwise one by a star structure, whose root is the reference graph $\mathcal{G}_r$, and the leaf are the others $\{\mathcal{G}_k\}_{k=1,\neq r}^N$. Then it sequentially updates each $\textbf{X}_{ru}$ by fixing the rest $N-2$ basis variables. Both factorized and non-factorized models regarding the affinity matrix are devised in \cite{YanTIP15}. Only the non-factorized model MatchOpt is compared as it has been shown in \cite{YanTIP15} more cost-effective. We set its iteration threshold $T_{max}=4$ as it often converges quickly.

\textbf{Match lifting via convex relaxation (MatchLift)} MatchLift \cite{ChenICML14} adopts a first-order approximate algorithm for semi-definite programming. The main cost is computing the eigenvalues for iterative updating the dual variable at cost of $O(N^3n^3)$. This solver typically requires more iterations to reach optimum results (10--20 rounds) thus we set $T_{max}=20$.

\textbf{Permutation synchronization (MatchSync)} MatchSync \cite{PachauriNIPS13} uses spectral analysis to the grouped `target matrix' $\mathcal{T}$ of size $nN\times nN$ (Eq.7 in \cite{PachauriNIPS13}) stacked by all initial $\{\textbf{X}_{ij}^{(0)}\}_{i,j=1}^N$. It involves a one-shot SVD to find the $n$ leading eigenvectors of `$\mathcal{T}$' whose cost is $O(n^3N^2)$, and $N$ iterations of Hungarian method with cost $O(Nn^3)$. Thus the total cost except the initial pairwise matching step is $O(N^2n^3)$. Since MatchSync is applicable only when $n\leq N$, thus the Maximum Spanning Tree (MST) is used on the consistency-wise super graph $\mathcal{G}^c_{sup}$ if $n> N$.

We consider the time complexity of our methods in several aspects as follows. The overall time complexity for all compared methods is summarized in Table \ref{tab:time_complexity}.

First, computing the pairwise score $J(\textbf{X}_{ik}\textbf{X}_{kj})$ in all our methods by trying $N-2$ anchor graph $\{\mathcal{G}_k\}_{k=1,\neq i,j}^N$ is repeated for $\frac{N(N-1)}{2}$ pairs, at the unit cost of $O(n^3)$ as $\textbf{X}_{ij}$ is a sparse permutation matrix. Thus the related overhead is $O(N^3n^3)$, which is also the complexity of ISB, except for computing the initial $\mathbb{X}^{(0)}$, whose complexity $O(N^2\tau_{\text{pair}})$ depends on the pairwise matching solver.

The second category refers to the computing of the pairwise consistency $\{C_p(\textbf{X}_{ik}\textbf{X}_{kj},\mathbb{X})\}_{i,j,k=1}^N$ in Algorithm ISB-GC. The complexity of $C_p$ is $O(Nn)$ by code level optimization, therefore similar to the first category, the total overhead in iteration $t$ is $O(N^4n)$, and the overall complexity of ISB-GC is $O(N^4n)+O(N^3n^3)+O(N^2\tau_{\text{pair}})$.

The third category refers to the off-line computing unary and pairwise consistency as used in ISB-GC-U/P. The cost for computing each unary consistency $C_u$ can be reduced to $O(N^2n)$, thus the overall cost of computing $\{C_u(k,\mathbb{X})\}_{k=1}^N$ for $\frac{N(N-1)}{2}$ pairs of graphs is $O(N^3n)$. As a result, the overall time complexity of ISB-GC-U is $O(N^3n^3)+O(N^3n)+O(N^2\tau_{\text{pair}})$. On the other hand, computing the off-line pairwise consistency $\{C_{p}(\textbf{X}_{ij}^{(t-1)},\mathbb{X}^{(t-1)})\}_{i=1,j=i+1}^{N-1,N}$ will cost $O(N^3n)$ at the unit expense of $O(Nn)$ for each $C_p$. The overall complexity of ISB-GC-P is also $O(N^3n^3)+O(N^3n)+O(N^2\tau_{\text{pair}})$.
\begin{table}[tb!]
\vspace{-5pt}
\center
\small
\caption{Parameter settings for real-image test in Fig.\ref{fig:formal_real_vary_cnt_test}.}
\resizebox{0.47\textwidth}{!}{
\begin{tabular}{|rrr|}
  \hline
  object&x-axis&fixed parameters\\\hline
    car, duck, bike, bottle&$N$=4-28&$c=1,\sigma^2=.1,\beta=.9,n_{i}=10,n_{o}=2$\\\hline
    hotel, house, volvoC70, houseblack&$N$=4-36&$c=1,\sigma^2=.05,\beta=0,n_{i}=10,n_{o}=4$\\
  \hline
\end{tabular}\label{tab:formal_study_real_setting}}
\normalsize
\vspace{-15pt}
\end{table}

Finally, we consider the run-time overhead for the inlier-eliciting mask operation $\psi_c(\cdot,\mathbb{X},n_i)$ or $\psi_a(\cdot,\mathbb{X},\mathbb{K},n_i)$. For $\psi_c(\cdot,\mathbb{X},n_i)$, it is in fact concerning with computing the node-wise consistency $\{C_n(u^k,\mathbb{X})\}_{u^k=1,k=1}^{n,N}$ for each node in every graph, thus the total overhead in each iteration is $O(N^3n)$ at the unit cost of $O(N^2n)$ per graph. For $\psi_a(\cdot,\mathbb{X},\mathbb{K},n_i)$, the node-wise affinity by Definition (\ref{def:node_affinity}) is $O(N^2n^3)$. Regardless $\psi_c$ or $\psi_a$ is used, the associated overhead can be absorbed in the overall complexity as shown in Table.\ref{tab:time_complexity}. We will evaluate these two inlier-eliciting metrics in the experiments.
\subsection{Experimental results and discussion}
For the `RRWM' and `MPM' curves in the plots, they are generated by applying the two methods on each pair of graphs independently. `ISB$^*$' denotes performing ISB without running the post step L\ref{alg:isl:enforce_consistency_begin}-\ref{alg:isl:enforce_consistency_end} in the chart of Alg.\ref{alg:ISB} and so for other methods. Fifty random trials are performed for all synthetic tests that generate the average results as plot in the figures. For real image data, 20 random trials are sampled from the image set for each setting. For clarity, standard deviations are not plot and they are found relatively homogenous over curves.
\begin{table}[!]
\vspace{-5pt}
\center
\small
\caption{Settings for random point set test with more outliers in Fig.\ref{fig:outlier_random_test}.}
\resizebox{0.47\textwidth}{!}{
\begin{tabular}{|rrr|}
  \hline
  x-axis&fixed parameters&results\\\hline
  $n_{o}$=6-16&$N$=20,$\varepsilon$=.02,$n_{i}$=6,$\rho$=1,$\sigma$=.05,$c$=1,$T_{max}$=6&Fig.\ref{fig:random_outlier_point_rrwm_vary_nout_acc}\\
  $N$=4-32&$\varepsilon$=.05,$n_{i}$=6,$n_{o}$=12,$\rho$=1,$\sigma$=.05,$c$=1,$T_{max}$=6&Fig.\ref{fig:random_outlier_point_rrwm_vary_ngrh_acc}\\
  $\varepsilon$=0-.1&$N$=20,$n_{i}$=6,$n_{o}$=12,$\rho$=.9,$\sigma$=.05,$c$=1,$T_{max}$=6&Fig.\ref{fig:random_outlier_point_rrwm_vary_deform_acc}\\
  $n_{est}$=1-18&$N$=20,$n_{i}$=6,$n_{o}$=12,$\varepsilon$=.05,$\rho$=1,$\sigma$=.05,$c$=1,$T_{max}$=6&Fig.\ref{fig:random_outlier_point_rrwm_vary_nest_acc}\\
  \hline
\end{tabular}\label{tab:outlier_syn_setting}}
\normalsize
\end{table}
\begin{table}[!]
\vspace{-5pt}
\center
\small
\caption{Settings of Willow-ObjectClass test with more outliers in Fig.\ref{fig:outlier_object_test}.}
\resizebox{0.47\textwidth}{!}{
\begin{tabular}{|rrr|}
  \hline
  object&x-axis&fixed parameters\\\hline
  car, duck, face, bike, bottle&$n_{o}$=0-20&$c=1,\sigma^2=.1,\beta=.9,n_{est}=n_{i}=10,N=28$\\\hline
  car, bike, bottle&$N$=4-28&$c=1,\sigma^2=.1,\beta=.9,n_{est}=n_{i}=10,n_{o}=10$\\\hline
  car, duck, bike, bottle&$n_{est}$=2-20&$c=1,\sigma^2=.1,\beta=.9,n_{i}=10,n_{o}=10,N=28$\\\hline
\end{tabular}\label{tab:outlier_real_setting}}
\normalsize
\end{table}

\textbf{In case of few outliers} Fig.\ref{fig:case_study} gives a comparison with controlled noise level, for several putative methods in this paper besides the main algorithms: \{ISB, ISB$^{\text{cst}}$, ISB$^{\text{2nd}}$\} in Sec.\ref{subsec:affinity_boosting} and \{ISB-GC, ISB-GC$^{\text{inv}}$\} in Sec.\ref{subsec:consistency_boosting}. Fig.\ref{fig:iter_study} shows the accuracy boosting behavior as a function of $T_{max}$ and based on this plot, we set $T_{max}=6$. There is little noise regarding with the affinity function for the coverage test in Fig.\ref{fig:random_iter_coverage_rrwm_acc}, thus pure affinity-driven approaches outperform. In other cases, ISB and ISB$^{\text{2nd}}$ under-perform and the latter even slightly degenerates as $T_{max}$ grows. Fig.\ref{fig:random_sample_rate_rrwm_acc} suggests the relation between the accuracy and the sampling rate $r$ for the $(N-2)*r$ graphs used from all graphs over the exhaustive searching space, to find the best anchor graph. Since our framework is applicable to other random search strategies, which might also depend on the specific applications, thus here we do not dwell on how to design an efficient random search technique. Our comments are as follows:

i) The proposed main method ISB-GC outperforms the baseline ISB and state-of-the-arts in most cases for both synthetic random graphs (Fig.\ref{fig:formal_syn_test}) and real images (Fig.\ref{fig:formal_real_vary_cnt_test}), except for the `coverage' test (Fig.\ref{fig:random_formal_vary_coverage_rrwm_acc}, Fig.\ref{fig:random_formal_vary_graph_cnt_coverage_rrwm_acc}), in which many pairwise matchings in the initial $\mathbb{X}^{(0)}$ are randomly generated and a consistent $\mathbb{X}^*$ is biased to the true accuracy. Specifically, the methods utilizing affinity information including \cite{YanTIP15} and ours, outperform MatchLift \cite{ChenICML14} and MatchSync \cite{PachauriNIPS13} that only use the initial matching configuration $\mathbb{X}^{(0)}$ as input, when $\mathbb{X}^{(0)}$ is largely corrupted as controlled by coverage rate $c$.

ii) As shown in the bottom row of Fig.\ref{fig:formal_syn_test} in addition with Fig.\ref{fig:formal_real_vary_cnt_test}, the accuracy in general increases as the number of graphs $N$ grows, though there is some fluctuations especially for the test on `car' as shown in Fig.\ref{fig:object_formal_car_rrwm_vary_cnt_acc}. We think this is due to the inherent matching difficulty of the sampled images, which is evidenced by the baseline `RRWM' whose performance also fluctuates. We find this phenomenon is more pronounced as shown in the bottom row of Fig.\ref{fig:formal_real_vary_cnt_test} due to the larger viewing range when more frames are sampled. In fact the \emph{relative} accuracy improvement against the RRWM baseline is increasing as $N$ grows. This is in line with the intuition that more graphs can help dismiss local ambiguity.

iii) For the `partial similarity' CMU-POSE data test which is often the case in reality, as shown in the bottom row of Fig.\ref{fig:formal_real_vary_cnt_test}, without enforcing full consistency (dashed curve) is more effective than using this constraint (solid curve). It suggests the efficacy of our graduated consistency regularized approach against the two-step hard synchronization methodology from another perspective.

\textbf{In case of more outliers} Our methods are tailored to the presence of a majority of outliers as described in Sec.\ref{subsec:outlier_adapt}. The performance is depicted in Fig.\ref{fig:outlier_random_test} and Fig.\ref{fig:outlier_object_test}, for the tests on synthetic point sets (inline with the setting for the compared method MPM \cite{ChoCVPR14}) and images, by varying the number of outliers, graphs and the estimated inliers\footnote{MatchLift is not plot in Fig.\ref{fig:outlier_random_test}, Fig.\ref{fig:outlier_object_test} as they are already very busy. It is not tailored for the presence of a majority of outliers thus under-perform as shown in the supplemental material. And its cost is heavier as shown in Fig.\ref{fig:random_formal_vary_outlier_rrwm_tim}, Fig.\ref{fig:random_formal_vary_graph_cnt_deform_rrwm_tim} as it takes more rounds of iterations to reach a satisfactory solution due to the first-order iterative procedure.}. No post-synchronization is performed in the outlier tests. Discussions are presented as follows:

i) In Fig.\ref{fig:object_outlier_rrwm_nout_hit_rate}, we show an example of the hitting rate of the top $n_i$ nodes in descending order by the node-wise consistency (solid curve) and affinity (dashed curve) metrics against true inliers. Given such reasonable hitting rates ($>0.8$), the suite of our inlier eliciting methods, \emph{i.e.} the consistency-driven variant (marked by the superscript `cst') and the affinity-driven one (marked by `sim') in general outperform state-of-the-arts notably. Note in Fig.\ref{fig:object_outlier_rrwm_nout_hit_rate}, the hitting rate for `face' by the affinity-driven metric is robust against the number of outliers. This is because the landmarks on face form a very distinctive structure thus its affinity is more informative.

ii) The sensitive test \emph{w.r.t.} the disturbance of $n_i$ is illustrated in Fig.\ref{fig:random_outlier_point_rrwm_vary_nest_acc} and the bottom row of Fig.\ref{fig:outlier_object_test}, for synthetic data and real images respectively. The eliciting mechanism boosts the accuracy compared with our original algorithms which in fact set $n_i$ equal to $n=20$ with no discrimination to outliers. Moreover, the accuracy decline is relatively smooth around the exact $n_i=10$ in these plots which suggests the robustness of our methods given a rough estimation of $n_i$.

iii) Two run-time overhead examples are plot in Fig.\ref{fig:random_outlier_point_rrwm_vary_nout_tim} for the synthetic test and in Fig.\ref{fig:object_outlier_car_rrwm_vary_ngrh_tim} for the real image test. ISB-GC* is more costive as we find it converges slower and need more overhead in each iteration as discussed in the complexity analysis. Also, all the proposed algorithms can be parallelized more easily than other iterative methods like matchOpt, matchSync, because updating each $\textbf{X}^{(t-1)}_{ij}$ can be done independently.

iv) MPM outperforms the baseline RRWM but does not perform as robustly as our methods. We think this is because MPM is suited under moderate noises in addition with the outliers. This is often not true in realistic scenarios and our settings -- Table.\ref{tab:outlier_syn_setting} and Table.\ref{tab:outlier_real_setting}.

v) In the synthetic point data test for Fig.\ref{fig:random_outlier_point_rrwm_vary_nout_acc} and Fig.\ref{fig:random_outlier_point_rrwm_vary_ngrh_acc}, the affinity-driven inlier eliciting variant outperforms the consistency-driven one given the deformation $\varepsilon<.05$. However, as shown in Fig.\ref{fig:random_outlier_point_rrwm_vary_deform_acc} when $\varepsilon$ grows by fixing the number of outliers, the consistency-driven mechanism outperforms when $\varepsilon>.05$. This is consistent with the motivation of this paper: consistency becomes helpful in the existence of large noises given few outliers.
\section{Conclusion}\label{sec:conclusion}
In the paper, we propose general algorithms towards multi-graph matching by incorporating both affinity score and matching consistency in an iterative approximate boosting procedure. The outlier-tolerance variants are designed by eliciting the affinity and consistency associated with inliers. The experimental results suggest that i) the method Alg.\ref{alg:ISB-GC} (ISB-GC) in general achieves more competitive accuracy compared with state-of-the-arts; ii) the methods Alg.\ref{alg:ISB-EGC} (ISB-GC-U/P) in general improve the cost-effectiveness of Alg.\ref{alg:ISB} (ISB) especially on the real image data under arbitrary noises.

{\scriptsize

\bibliographystyle{IEEEtran}
\vspace{-40pt}
\begin{IEEEbiography}[{\includegraphics[width=1in,height=1.25in,clip,keepaspectratio]{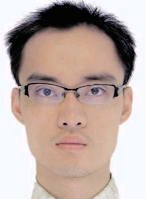}}]{Junchi Yan}
is a Ph.D. candidate at the Department of Electronic Engineering of Shanghai Jiao Tong University, China. He received the M. S. degree from the same university in 2011. He is also a Research Staff Member and IBM Master Inventor with IBM. He is the recipient of IBM Research Accomplishment and Outstanding Accomplishment Award in 2013 and 2014. His research interests are computer vision and machine learning applications, with publications including ICCV, ECCV, CVPR, IJCAI, AAAI, T-IP.
\end{IEEEbiography}
\vspace{-40pt}
\begin{IEEEbiography}[{\includegraphics[width=1in,height=1.25in,clip,keepaspectratio]{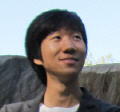}}]{Minsu Cho}
is a postdoctoral research fellow working with Prof. Jean Ponce at the WILLOW team of INRIA / Ecole Normale Superieure, Paris, France. He obtained his Ph.D. at Seoul National University, Korea. His research lies in the areas of computer vision and machine learning, especially in the problems of object recognition and graph matching.
\end{IEEEbiography}
\vspace{-40pt}
\begin{IEEEbiography}[{\includegraphics[width=1in,height=1.25in,clip,keepaspectratio]{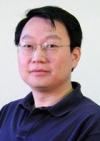}}]{Hongyuan Zha}
is a Professor at East China Normal University and the School of Computational Science and Engineering, College of Computing, Georgia Institute of Technology. He earned his PhD degree in scientific computing from Stanford University in 1993. Since then he has been working on information retrieval, machine learning applications and numerical methods. He is the recipient of the Leslie Fox Prize (1991, second prize) of the Institute of Mathematics and its Applications, the Outstanding Paper Awards of the 26th International Conference on Advances in Neural Information Processing Systems (NIPS 2013) and the Best Student Paper Award (advisor) of the 34th ACM SIGIR International Conference on Information Retrieval (SIGIR 2011). He serves as an Associate Editor of IEEE Transactions on Knowledge and Data Engineering.
\end{IEEEbiography}
\vspace{-40pt}
\begin{IEEEbiography}[{\includegraphics[width=1in,height=1.25in,clip,keepaspectratio]{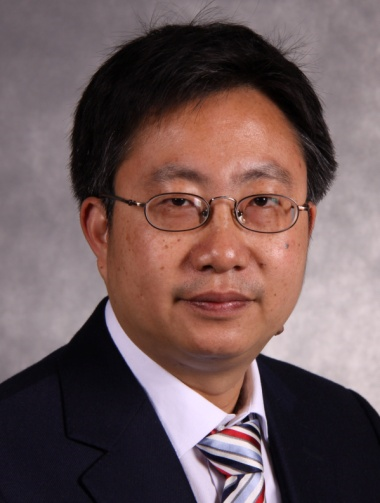}}]{Xiaokang YANG} (M'00-SM'04) received the B. S. degree from Xiamen University, Xiamen, China, in 1994, the M. S. degree from Chinese Academy of Sciences, Shanghai, China, in 1997, and the Ph.D. degree from Shanghai Jiao Tong University, Shanghai, China, in 2000. He is currently a Distinguished Professor of School of Electronic Information and Electrical Engineering, and the deputy director of the Institute of Image Communication and Information Processing, Shanghai Jiao Tong University, Shanghai, China. His research interests include visual signal processing and communication, media analysis and retrieval, and pattern recognition.
\end{IEEEbiography}
\vspace{-40pt}
\begin{IEEEbiography}[{\includegraphics[width=1in,height=1.25in,clip,keepaspectratio]{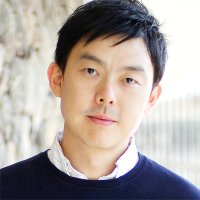}}]{Stephen Chu} is a Research Scientist and Senior Technical Staff Member at IBM. His team won the top prize in the DARPA-organized speech recognition contest for five consecutive times.  His pioneering work in Audio-Visual Speech recognition was featured in the PBS program Scientific American Frontiers with Alan Alda. His research interests cover speech recognition, computer vision, and signal processing. Stephen studied Physics at Peking University in Beijing, China, and received his M.S. and Ph.D. degrees in Electrical and Computer Engineering form University of Illinois at Urbana-Champaign.
\end{IEEEbiography}
}
\end{document}